\documentclass{article}
\usepackage{arxiv}
\usepackage[british]{babel}

\usepackage{booktabs,url,amsmath,algorithm2e,hyperref,microtype,stmaryrd,enumitem,bm,amssymb,mathtools,tikz,pgfplots,multirow,amsthm,caption,subcaption,natbib}
\pgfplotsset{compat=1.18}
\usetikzlibrary{arrows,plotmarks,decorations.markings,trees,shapes,pgfplots.groupplots,automata,circuits}
\tikzset{dot/.style = {circle, fill, minimum size=#1,inner sep=0pt, outer sep=0pt, fill, circle},dot/.default = 6pt}
\tikzset{dot2/.style = {circle, fill, color=black!40,minimum size=6pt,inner sep=0pt, outer sep=0pt, fill, circle}}
\tikzstyle{a}=[->,>=stealth,dashed]
\tikzstyle{a2}=[->,>=stealth]

\newtheorem{proposition}{Proposition}
\newtheorem{theorem}{Theorem}
\newtheorem{corollary}{Corollary}
\newtheorem{lemma}{Lemma}

\title{Partial Identification under Causal Orders by Linear Programming}
\date{}

\author{
	\href{https://orcid.org/0009-0000-4174-3597}{\includegraphics[scale=0.06]{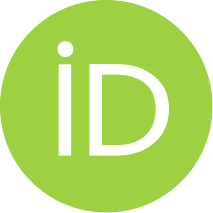}\hspace{1mm}Eric Rossetto} 
	\quad \textnormal{and} \quad 
	\href{https://orcid.org/0000-0001-7915-2768}{\includegraphics[scale=0.06]{orcid.pdf}\hspace{1mm}Alessandro Antonucci} \\[1.5ex]
	Istituto Dalle Molle di Studi sull'Intelligenza Artificiale (IDSIA)\\
	Scuola Universitaria Professionale della Svizzera Italiana (SUPSI)\\
	Lugano, Switzerland \\
	\texttt{\{eric.rossetto, alessandro.antonucci\}@supsi.ch} \\
}

\hypersetup{
pdftitle={Partial Identification under Causal Orders by Linear Programming},
pdfsubject={cs.AI},
pdfauthor={Eric Rossetto, Alessandro Antonucci},
pdfkeywords={Structural Causal Models, Partial Identifiability, Linear Programming},
}

\begin{document}
\maketitle
\begin{abstract}
Non-parametric (partial) identification of counterfactual queries typically relies on a fully specified causal graph. Motivated by settings with incomplete domain knowledge, we challenge this requirement by leveraging structural assumptions that are inherently implied by the query itself. We show that any counterfactual inquiry induces a, mostly partial, topological ordering over relevant variables, which, in turn, enables an explicit query parametrisation reducing the identification task to a linear program. This allows bounding arbitrary counterfactual and nested counterfactual queries. Our work can be viewed as a generalisation of the classical bounding framework of Tian and Pearl (2000), originally developed for probabilities of causation. We also prove the \emph{tightness} of our bounds by constructing structural causal models that attain the bounds whilst being compatible with both the observed data and the query-implied order. To assess both the generality and practical utility of the proposed bounding procedure, we revisit several case studies from the literature, demonstrating how the derived bounds can be used to yield informative insights even in the absence of an input causal graph.
\end{abstract}
\keywords{Structural Causal Models \and Partial Identifiability \and Linear Programming}

\section{Introduction}\label{sec:introduction}
Causal reasoning deepens our understanding of the effects of policies, actions, and treatments in domains that require accountable decision-making, such as algorithmic fairness, personalised medicine, economics, and the broader social sciences. Quantifying such causal inquiries demands a rigorous mathematical foundation. The work of \citet{pearlCausalityModelsReasoning2009} formalised the systems under study using structural causal models (SCMs), which elegantly encode prior causal beliefs via directed graphs \citep{pearlCausalDiagrams1995}. Ideally, these structural assumptions allow causal queries to be uniquely computed---formally \emph{identified}---from the available set of observations. In many practical settings, however, the available assumptions are insufficient for point identification, necessitating \emph{partial identification}: computing a range within which the target quantity is guaranteed to lie.

Early work by \citet{manskiNonParametricBoundTreatmentEffects1990} derived closed-form bounds for simple treatment effect estimation without relying on structural assumptions. Similarly, \citet{tianProbabilitiesCausationBounds2000} formulated a linear program to compute sharp bounds on the \emph{probabilities of causation} \citep{pearlProbabilitiesCausationThree1999} from experimental and observational data under either general or mild assumptions. Whilst weak structural assumptions typically yield substantially wide bounds, assuming a fully specified causal graph might produce tighter intervals. Graph-based reductions to linear programs were first developed for the instrumental variable setting \citep{balkeCounterfactualProbabilitiesComputational1994, balkeProbabilisticEvaluationCounterfactual1994} and later extended to generalised instrumental constraints \citep{sachsGeneralMethodDeriving2023}. For general graphical assumptions, partial identification is reduced to a polynomial program via \emph{canonicalisation}, though the exponential growth of the resulting exogenous state space ultimately necessitates approximate methods \citep{zhangPartialCounterfactualIdentification2021, duarte2024automated}.

We revisit partial identification in the spirit of \citet{tianProbabilitiesCausationBounds2000}, whose bounding method did not require a specification of a causal graph; instead, their proposal relied on a direct parametrisation of the joint counterfactual distribution. To extend this framework to arbitrary queries, we leverage the minimal structural assumptions inherently implied by the query itself, reducing the identification task to a linear program without imposing additional structural commitments. Our specific contributions are as follows: (i)~we reduce partial identification under a total order of variables to a linear program, capable of handling arbitrary counterfactual, including nested, queries; (ii)~we prove that the resulting bounds are \emph{tight} by constructing witnessing structural causal models that attain them; (iii)~we prove that the order-dependent optimisation task is invariant under both the marginalisation of unqueried variables and the specific choice of total order (compatible with the query); and (iv)~we show that the linear program admits a lifted reformulation by exploiting symmetries induced by the observations and the query's logical evaluations. The presented results define an automated framework that enables the bounding of arbitrary queries without the need for explicit graphical assumptions. It is important to emphasise that graph-based methods generally yield narrower bounds by exploiting conditional independencies; in contrast, we pose ourselves in a regime of incomplete domain knowledge, abstaining from such graphical commitments.

After introducing the necessary background in Sect.~\ref{sec:background}, Sect.~\ref{sec:algorithm} reduces partial identification to a linear programme, which Sect.~\ref{sec:general} extends to accommodate conditional queries, nested counterfactuals, and structural constraints. Sect.~\ref{sec:examples} demonstrates practical case studies, followed by conclusions in Sect.~\ref{sec:conclusions}; proofs, additional experiments and discussions are in Apps.~\ref{sec:appendix_A},~\ref{sec:appendix_B} and~\ref{sec:appendix_C}, respectively.

\section{Background}\label{sec:background}
\paragraph{Notation.} Let us first review some necessary background on causality. We denote variables by capital letters, whilst small and calligraphic letters are used instead for the states and the sample spaces. Thus, $x\in\mathcal{X}$ is a state of $X$. We consider discrete variables only. Bold is used for sets of variables. The indicator function that is one when its argument is true and zero otherwise is denoted as $\llbracket \cdot \rrbracket$, whilst $|\cdot|$ is the cardinality of the set in the argument.

\paragraph{Structural Causal Models (SCMs).}
Following \citet{pearlCausalityModelsReasoning2009}, an SCM $\mathcal{M} \coloneq \langle \bm{V}, \bm{U}, \mathcal{F}, $ $P(\bm{U})\rangle$ is such that $\bm{V}$ is its set of endogenous variables, $\bm{U}$ the set of exogenous variables distributed according to $P(\bm{U})$, and $\mathcal{F}$ a set $\{f_V\}_{V\in \bm{V}}$ of \emph{structural equations} (SEs). The SE of each $V\in\bm{V}$ determines $v=f_V(\mathrm{pa}_V, \bm{u}_V)$, where $\mathrm{pa}_V \in \mathrm{Pa}_V \subseteq \bm{V} \setminus \{V\}$ and $\bm{U}_V \subseteq \bm{U}$ are, respectively, the endogenous and the exogenous inputs of the SE. We may view an SE as a collection of deterministic mechanisms, or mappings, from input configurations to output states. If $P(\bm{U})$ is unavailable, $M\coloneq\langle \bm{V}, \bm{U}, \mathcal{F} \rangle$ is termed \emph{partially specified} SCM (PSCM). 

Both SCM $\mathcal{M}$ and PSCM $M$ induce a directed graph $\mathcal{G}$ whose nodes represent variables and an edge $(X,Y)$ exists iff $X \in \mathrm{Pa}_Y$. We restrict our analysis to \emph{recursive} SCMs, i.e., such that $\mathcal{G}$ is \emph{acyclic}. We adopt a kinship terminology (e.g., \emph{parents}) to present graphical relations between variables. The notation $\mathbb{M}_M$ is finally used for the set of all the SCMs that share the SEs' signature, i.e., parent and child variables, with $M$.

\paragraph{Interventions.} In an SCM $\mathcal{M}$, $P(\bm{U})$ and $\mathcal{F}$ induce a joint \emph{observational} distribution $P_{\mathcal{M}}(\bm{V})$. An SCM \emph{natural} regime can be modified by acting on its SEs. For an arbitrary set $\bm{X} \subseteq \bm{V}$, an \emph{intervention} yields a new model $\mathcal{M}_{\bm{x}}$ (i.e., a \emph{sub-model}) by replacing the SEs of $\bm{X}$ with constant assignments $\bm{X}\gets\bm{x}$, whilst keeping $P(\bm{U})$ and all other SEs unchanged. The resulting, \emph{interventional}, distribution is denoted as $P_{\mathcal{M}_{\bm{x}}}(\bm{V} \setminus \bm{X})$. For any $Y\in\bm{V}$, the \emph{potential response} $Y_{\bm{x}}(\bm{u})$ denotes the value of $Y$ in $\mathcal{M}_{\bm{x}}$ given $\bm{U}=\bm{u}$. A \emph{counterfactual} variable (or \emph{potential outcome}) $Y_{\bm{x}}$ is the variable induced by $Y_{\bm{x}}(\bm{u})$ when $\bm{u} \sim P(\bm{U})$. 

\paragraph{Counterfactuals.}
Following the logical formalisation of a PSCM by \citet{halpernAxiomatizingCausalReasoning2000}, we define an \emph{atomic proposition} as the assignment $Y_{\bm{x}}=y$ (for short, $y_{\bm{x}}$), where $Y \in \bm{V}, \bm{X}\subseteq \bm{V}$ and $y \in \mathcal{Y}$. A \emph{counterfactual event} $\gamma$ is a conjunction of $k$ atomic propositions, $\gamma \equiv y^{(1)}_{\bm{x}^{(1)}} \land \dots \land y^{(k)}_{\bm{x}^{(k)}}$. A \emph{counterfactual query} $P_\mathcal{M}(\gamma)$ asks for the probability of $\gamma$ with respect to an SCM $\mathcal{M}$. This requires the simultaneous computation of potential responses of multiple sub-models $\{\mathcal{M}_{\bm{x}^{(i)}}\}_{i=1}^k$, coupled by the shared $P(\bm{U})$, i.e.,
\begin{equation}
\label{eq:query_eval}
    P_{\mathcal{M}}(\gamma) = \sum_{\bm{u} \in \bm{\mathcal{U}}} \left\llbracket \bigwedge_{i=1}^k Y^{(i)}_{\bm{x}^{(i)}}(\bm{u}) = y^{(i)} \right\rrbracket \cdot P(\bm{u}) \,.
\end{equation}
The evaluation accumulates over the exogenous instantiations $\bm{u}$ satisfying the formula $\gamma$. We denote such an \emph{entailment} as $(\mathcal{M}, \bm{u}) \models \gamma$. When it is clear from the context which SCM we refer to, we omit specifying it and rewrite the right-hand side of Eq.~\eqref{eq:query_eval} as $\sum_{\bm{u}} \llbracket \bm{u} \models \gamma\rrbracket P(\bm{u})$. If all the subscripts agree on $\bm{x}$, then Eq.~\eqref{eq:query_eval} can be retrieved from the interventional distribution $P_{\mathcal{M}_{\bm{x}}}$. In general, this is not the case and we eventually characterise a \emph{counterfactual} distribution. Finally, we denote as $\bm{V}_\gamma \subseteq \bm{V}$ the endogenous, no matter whether queried or intervened, variables in $\gamma$.

\paragraph{Nested Counterfactuals.}
We can further generalise the intervention for the counterfactual variable $Y_x$ to allow representing settings where the variable $X$ is set to behave as another variable, say $X_z$. A random variable $Y$ in such system is represented with a potential outcome of the form $Y_{X_z}=y$, which is called a \emph{nested counterfactual}. This expresses the statement ``$Y$ becomes $y$ when we set $X$ to whatever value it would have taken if we had set $Z$ to $z$''. Queries involving nested counterfactuals can be rewritten into a marginalisation over conjunctions of standard, atomic counterfactual events \citep{correaNestedCounterfactualIdentification2021}, i.e.,
\begin{equation}\label{eq:cut}
\left(Y_{X_z}=y\right) \equiv \bigvee_{x \in \mathcal{X}} \left( Y_x = y \land X_{z} = x \right) \,.
\end{equation}

\paragraph{(Partial) Identifiability.}
Eq.~\eqref{eq:query_eval} allows to compute counterfactual queries in an SCM $\mathcal{M}$. Yet, the exogenous distribution $P(\bm{U})$ is rarely available, and one must cope with the corresponding PSCM $M$ and a dataset $\mathcal{D}$ of endogenous observations, used to estimate an \emph{empirical} distribution $\hat{P}(\bm{V})$. In general, there are multiple SCMs in $\mathbb{M}_M$, that could have generated $\hat{P}(\bm{V})$ and yield different values to a query over $\gamma$. In such a \emph{partially identifiable} setting, we can only compute bounds to the query, i.e., the minimum:
\begin{equation}
\label{eq:partial_identification_opt_general} 
    \underset{\mathcal{M} \in \mathbb{M}_M}{\min} \,\, P_{\mathcal{M}}(\gamma) \quad \text{s.t.} \quad P_{\mathcal{M}}(\bm{V}) = \hat{P}(\bm{V}) \,,
\end{equation}
and the maximum, which is analogously defined. In the remainder of the paper, just for the sake of brevity, we focus on the minimisation problem, noticing that the maximisation counterpart can be handled analogously. 

Those optimisation tasks have been shown to be NP-hard \citep{zaffalonEfficientComputationCounterfactual2023}. Nevertheless, approximate techniques were proposed to estimate bounds efficiently \citep{zhangPartialCounterfactualIdentification2021,zaffalonEfficientComputationCounterfactual2023,duarte2024automated, bjoru2025DivideConquerCausalComp}. Standard (point) \emph{identifiability} emerges as a special case when the two bounds coincide.

\paragraph{Queries and Partial Orders.}
Although potential outcome semantics allow us to consider a variable $Y_{x}$ for any $Y, X \in \bm{V}$ and $x \in \mathcal{X}$, an intervention on $X$ can alter the realisation of $Y$ only if there exists a directed path from $X$ to $Y$ in the causal graph induced by the assumed SCM $\mathcal{M}$ over $\bm{V}$. This is formalised as follows.
\begin{proposition}[\citeauthor{tian2002studies}, \citeyear{tian2002studies}]\label{prop:topological_invariance}
If $Y, X \in \bm{V}$ are distinct endogenous variables of an SCM $\mathcal{M}$ such that $X \notin \mathrm{Anc}(Y)$, then, $\forall\bm{u}$, $Y_{x}(\bm{u})=Y(\bm{u})$, and thus $P_{\mathcal{M}}(Y_{x}) = P_{\mathcal{M}}(Y)$.
\end{proposition}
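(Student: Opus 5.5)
The plan is to argue pointwise in $\bm{u}$, by induction along a topological order of the acyclic graph $\mathcal{G}$ induced by $\mathcal{M}$. Fix $\bm{u}$ and compare the solution of the original model $\mathcal{M}$ with that of the sub-model $\mathcal{M}_x$. Recursiveness guarantees that both systems of SEs have a unique solution for each $\bm{u}$. We can therefore write $V(\bm{u})$ and $V_x(\bm{u})$ for the value of $V$ in each model, computed by evaluating the SEs in topological order.

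The key claim is stronger than the statement: for every $V \in \bm{V}$ with $X \notin \mathrm{Anc}(V) \cup \{V\}$, we have $V_x(\bm{u}) = V(\bm{u})$. Here $\mathrm{Anc}(V)$ is understood as the set of proper ancestors, so $X \notin \mathrm{Anc}(Y)$ together with $X \neq Y$ makes $Y$ satisfy the hypothesis. We prove the claim by strong induction over a topological ordering of $\mathcal{G}$.

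Take such a $V$. Since $V \neq X$, its SE is untouched by the intervention, so in both models
\[
v = f_V(\mathrm{pa}_V, \bm{u}_V).
\]
Every parent $W \in \mathrm{Pa}_V$ precedes $V$ in the order. Moreover $W \neq X$, since otherwise $X$ would be an ancestor of $V$. Also $X \notin \mathrm{Anc}(W)$, since ancestors of a parent are ancestors of $V$. By the induction hypothesis, $W_x(\bm{u}) = W(\bm{u})$ for all $W \in \mathrm{Pa}_V$. The exogenous inputs $\bm{u}_V$ are the same in both models. Hence $f_V$ receives identical arguments and $V_x(\bm{u}) = V(\bm{u})$. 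The base case, a root $V$ with no endogenous parents, follows the same way.

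Applying the claim to $V = Y$ gives $Y_x(\bm{u}) = Y(\bm{u})$ for all $\bm{u}$. The distributional statement then follows from Eq.~\eqref{eq:query_eval}, since the events $\{\bm{u} : Y_x(\bm{u}) = y\}$ and $\{\bm{u} : Y(\bm{u}) = y\}$ coincide, so $P_{\mathcal{M}}(Y_x = y) = P_{\mathcal{M}}(Y = y)$ for every $y$.

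The only delicate point is the bookkeeping. The induction must be stated for all non-descendants of $X$, not just for $Y$, because the parents of $Y$ need the same property. We also need $W \neq X$ for parents, which is exactly what the ancestor condition supplies. Beyond this the argument is routine.
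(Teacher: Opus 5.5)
The paper gives no proof of this proposition: it is imported from \citet{tian2002studies} and used only to motivate the query-induced order $\prec_\gamma$, so there is no in-paper argument to compare yours against. Your argument is correct and is the standard one. You strengthen the claim to all non-descendants of $X$ and induct along a topological order of $\mathcal{G}$. You check the two facts the step needs, namely that each parent $W$ of such a $V$ satisfies $W \neq X$ and $X \notin \mathrm{Anc}(W)$. The distributional statement then follows from Eq.~\eqref{eq:query_eval}, because the two events over $\bm{u}$ coincide. Your handling of the ancestor convention is also fine, since the distinctness of $X$ and $Y$ makes the proper and reflexive readings of $\mathrm{Anc}$ equivalent here. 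The same induction along $\sigma$ is what the paper uses in its proof of Lem.~\ref{lemma:signature_entailment}.
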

Thus, a counterfactual variable $Y_{x}$ can be non-trivial only if $X$ precedes $Y$ in the underlying causal graph. In this paper, we restrict attention to non-trivial queries: consequently, when the graph is not available, a query over $\gamma$ can be also used to induce a collection of precedence constraints, corresponding to a partial order $\prec_\gamma$ on $\bm{V}_\gamma$. A ``cyclic'' event such as $y_x \land x_y$ would be ill-posed as it induces incompatible precedence constraints. Such cases would naturally call for non-recursive models (see, for instance, \citeauthor{cozman25a}, \citeyear{cozman25a} and \citeauthor{bongers2021foundations}, \citeyear{bongers2021foundations}), which fall outside the scope of this paper.

\section{Bounding Framework}\label{sec:algorithm}
The main goal of this paper is to relax the optimisation task in Eq.~\eqref{eq:partial_identification_opt_general} to cope with situations where the underlying PSCM $M$ is unavailable. We start by assuming that the endogenous variables are sorted by a total order $\sigma$, corresponding to $\bm{V} \coloneqq (V^{(1)}, \dots, V^{(k)})$. In this setting, we can replace in Eq.~\eqref{eq:partial_identification_opt_general} the set $\mathbb{M}_{M}$ of SCMs compatible with the PSCM $M$ with the set $\mathbb{M}_{\sigma}$ of \emph{all} SCMs whose causal graph is consistent with $\sigma$, i.e.,
\begin{equation}
\label{eq:partial_identification_opt_fixed_order}
    \underset{\mathcal{M} \in \mathbb{M}_{\sigma}}{\min} P_{\mathcal{M}}(\gamma) \quad\text{s.t.} \quad P_{\mathcal{M}}(\bm{V}) = \hat{P}(\bm{V})\,.
\end{equation}
For the moment, we consider $\gamma$ such that $\bm{V}_\gamma=\bm{V}$, i.e., all endogenous variables appear in the query. A more general account of the case $\bm{V}_\gamma \subset \bm{V}$ is addressed later.

\paragraph{Response Representation.}
Unlike Eq.~\eqref{eq:partial_identification_opt_general}, the optimisation task in Eq.~\eqref{eq:partial_identification_opt_fixed_order} refers to a collection of SCMs possibly based on different PSCMs. Yet, given only $\sigma$ and $\bm{V}$, we consider a \emph{complete canonical} PSCM over $\bm{V}$ denoted as $M_\sigma$ and such that: (i) $\mathrm{Pa}_{V^{(i)}}$ coincides with all the predecessors of $V^{(i)}$ according to $\sigma$ and they are denoted as $\bm{V}_{\prec i}$; (ii) there is a single exogenous variable $U$, which is a parent of all the endogenous variables; (iii) the SEs of each $V \in\bm{V}$ and the cardinality of $U$ are such that $M_\sigma$ is \emph{canonical} in the sense of \citet{zhangPartialCounterfactualIdentification2021}, this basically meaning that the exogenous states of $U$ are indexing \emph{all} possible structural relations between the endogenous variables and their endogenous parents (see App.~\ref{sec:appendix_C} for a more detailed account).

The optimisation task in Eq.~\eqref{eq:partial_identification_opt_fixed_order} can be equivalently achieved by considering the complete canonical PSCM $M_{\sigma}$ only, as stated by the following result.
\begin{theorem}\label{thm:bounding_equivalence}
Eq.~\eqref{eq:partial_identification_opt_general} with $M=M_\sigma$ gives the same minimum of  Eq.~\eqref{eq:partial_identification_opt_fixed_order}, i.e.,
\begin{equation}\label{eq:theorem}
\min_{\mathcal{M} \in \mathbb{M}_{\sigma}} P_{\mathcal{M}}(\gamma) = \min_{\mathcal{M} \in \mathbb{M}_{M_\sigma}} P_{\mathcal{M}}(\gamma)\,,
\end{equation}
where $P_{\mathcal{M}}(\bm{V})=\hat{P}(\bm{V})$ is required on both sides.
\end{theorem}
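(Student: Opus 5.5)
We prove the equality by establishing the two inequalities separately, using throughout that both optimisation problems impose the same constraint $P_{\mathcal{M}}(\bm{V})=\hat{P}(\bm{V})$.

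The easy direction is $\min_{\mathbb{M}_\sigma}\le\min_{\mathbb{M}_{M_\sigma}}$. Every SCM in $\mathbb{M}_{M_\sigma}$ has a causal graph in which each $V^{(i)}$ has parents only among $\bm{V}_{\prec i}$ (plus the single exogenous $U$). That graph is acyclic and consistent with $\sigma$, so $\mathbb{M}_{M_\sigma}\subseteq\mathbb{M}_\sigma$. The feasible set on the right is therefore contained in the feasible set on the left, and the inequality follows.

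The substantive direction is $\min_{\mathbb{M}_{M_\sigma}}\le\min_{\mathbb{M}_\sigma}$. Here we show that every feasible $\mathcal{M}\in\mathbb{M}_\sigma$ can be simulated by some $\mathcal{M}'\in\mathbb{M}_{M_\sigma}$. The simulation must reproduce both $P(\bm{V})$ and the value $P(\gamma)$, and more generally the full joint distribution of all potential responses. The construction runs as follows.

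\begin{enumerate}
\item Fix an exogenous state $\bm{u}$ of $\mathcal{M}$. For each $i$, define the response function $r_i^{\bm{u}}:\mathcal{V}_{\prec i}\to\mathcal{V}^{(i)}$ by $r_i^{\bm{u}}(\bm{v}_{\prec i}) \coloneqq f_{V^{(i)}}(\mathrm{pa}_{V^{(i)}},\bm{u}_{V^{(i)}})$, where $\mathrm{pa}_{V^{(i)}}$ is the restriction of $\bm{v}_{\prec i}$ to the actual parents. This is well defined because $\mathrm{Pa}_{V^{(i)}}\subseteq\bm{V}_{\prec i}$ by consistency with $\sigma$.
\item The tuple $(r_1^{\bm{u}},\dots,r_k^{\bm{u}})$ is a joint mechanism. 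By canonicity of $M_\sigma$, it is indexed by some state $u'=\phi(\bm{u})$ of $U$. Define $P'(u')\coloneqq\sum_{\bm{u}:\phi(\bm{u})=u'}P(\bm{u})$, i.e.\ the pushforward of $P(\bm{U})$ under $\phi$.
\item Show by induction along $\sigma$ that for every intervention $\bm{x}$ and every $i$, $V^{(i)}_{\bm{x}}(\bm{u})$ in $\mathcal{M}$ equals $V^{(i)}_{\bm{x}}(\phi(\bm{u}))$ in $\mathcal{M}'$. If $V^{(i)}\in\bm{X}$, both values are the constant $x$. Otherwise both are obtained by applying the same function $r_i^{\bm{u}}$ to predecessor values, and these agree by the inductive hypothesis. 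The non-parents in $\bm{V}_{\prec i}$ are simply ignored by $r_i^{\bm{u}}$, so they cause no discrepancy.
\end{enumerate}

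From step 3, every conjunction of atomic propositions has the same truth value under $\bm{u}$ and $\phi(\bm{u})$. Summing as in Eq.~\eqref{eq:query_eval}, we obtain $P_{\mathcal{M}'}(\gamma)=P_{\mathcal{M}}(\gamma)$ and $P_{\mathcal{M}'}(\bm{V})=P_{\mathcal{M}}(\bm{V})=\hat{P}(\bm{V})$. Hence $\mathcal{M}'$ is feasible for the right-hand side and attains the same value, which gives the inequality.

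Both minima exist, or can be read as infima. The right-hand side is a linear program over a simplex, and the argument above holds value-wise, so the minima coincide.

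The main obstacle is the care needed in step 2. The inductive step of step 3 is straightforward, but the definition of canonicity must guarantee that every tuple of functions $\mathcal{V}_{\prec i}\to\mathcal{V}^{(i)}$ corresponds to some exogenous state. If instead $M_\sigma$ has one exogenous variable per endogenous node, the argument still goes through: the joint pushforward of $P(\bm{U})$ is placed on the product of the response spaces, which is permitted because $U$ is shared by all nodes, so arbitrary dependence among the responses is allowed. A minor point is that the original $\mathcal{M}$ may have several exogenous variables or infinitely many states. Since the map $\phi$ is measurable with a finite codomain, the pushforward is well defined in either case.
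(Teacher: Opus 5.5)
Your proposal is correct and follows the same route as the paper: the inclusion $\mathbb{M}_{M_\sigma}\subseteq\mathbb{M}_\sigma$ gives one inequality. For the other, the paper enlarges each structural equation to take all of $\bm{V}_{\prec i}$ and all exogenous variables as inputs, merges $\bm{U}$ into a single $U$, and then invokes canonicity to obtain an equivalent model in $\mathbb{M}_{M_\sigma}$. Your map $\phi$, the pushforward $P'$ and the induction along $\sigma$ are an explicit proof of that last canonicity step, which the paper only cites. You also state the inclusion in the correct direction; the paper's text writes $\mathbb{M}_{M_\sigma}\supseteq\mathbb{M}_\sigma$, which is a slip, since it is $\subseteq$ that yields the $\leq$ inequality.
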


The above result reduces the optimisation in Eq.~\eqref{eq:partial_identification_opt_fixed_order}---which considers a collection of structurally heterogenous SCMs---to span only those SCMs sharing the same causal graph as $M_\sigma$. This reduction provides a parametrisation of the task, where the optimisation variables are associated with the exogenous probabilities as in Eq.~\eqref{eq:query_eval}. To avoid explicitly modelling these latent variables, we build upon the approach proposed by \citet{balkeCounterfactualProbabilitiesComputational1994, balkeProbabilisticEvaluationCounterfactual1994}. Here, we consider all the potential outcomes that $M_\sigma$ is allowed to generate as to enumerate all possible endogenous mappings from parents to child. This is achieved by considering a \emph{response signature} $\bm{S}_\sigma$ for $M_\sigma$, corresponding to the following collection of potential outcomes:
\begin{equation}\label{eq:signature}
\bm{S}_\sigma \coloneq 
\left\{
V^{(i)}_{\bm{v}_{\prec i}}
\right\}_{\bm{v}_{\prec i}\in \bm{\mathcal{V}}_{\prec i}}^{i=1,\ldots,k}
\,.
\end{equation}
The following result shows that the probability of any counterfactual query can be expressed not only as a linear combination of joint probabilities over the exogenous variables, as in Eq.~\eqref{eq:query_eval}, but also as a linear combination of joint probabilities over $\bm{S}_{\sigma}$.

\begin{theorem}\label{thm:ctf_linearity}
A query $P_{\mathcal{M}}(\gamma)$ in an SCM $\mathcal{M} \in \mathbb{M}_{\sigma}$ can be written as a linear combination of the probabilities of the response signature, i.e.,
\begin{equation}\label{eq:thm_linearity}
P_{\mathcal{M}}(\gamma) = \sum_{\bm{s}_{\sigma} \in \bm{\mathcal{S}}_\sigma} \llbracket \bm{s}_{\sigma} \models \gamma \rrbracket \cdot P(\bm{s}_\sigma) \,.
\end{equation}
\end{theorem}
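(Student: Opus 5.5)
The plan is to show that, for every $\mathcal{M}\in\mathbb{M}_\sigma$, the truth value of $\gamma$ at an exogenous state $\bm{u}$ is a deterministic function of the response-signature values $\bm{S}_\sigma(\bm{u})$. Once this holds, Eq.~\eqref{eq:query_eval} can be regrouped by the value of $\bm{S}_\sigma$.

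Fix $\mathcal{M}\in\mathbb{M}_\sigma$. Every potential outcome $V^{(i)}_{\bm{v}_{\prec i}}$ is then a function of $\bm{u}$, so $\bm{S}_\sigma(\bm{u})$ defines a map $\bm{u}\mapsto\bm{s}_\sigma$, and $P(\bm{s}_\sigma)=\sum_{\bm{u}}\llbracket \bm{S}_\sigma(\bm{u})=\bm{s}_\sigma\rrbracket P(\bm{u})$ is its push-forward. The key lemma is: for every $\bm{X}\subseteq\bm{V}$, $\bm{x}$, and $Y\in\bm{V}$, the potential response $Y_{\bm{x}}(\bm{u})$ is determined by $\bm{s}_\sigma=\bm{S}_\sigma(\bm{u})$ through a fixed recursive evaluation that does not depend on $\mathcal{M}$.

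I would prove this by induction along $\sigma$, using the composition axiom of recursive SCMs (Halpern's axiomatisation). Let $\bm{w}$ denote the values of $\bm{V}_{\prec i}$ in $\mathcal{M}_{\bm{x}}$ under $\bm{u}$. Then $V^{(i)}_{\bm{x}}(\bm{u})=V^{(i)}_{\bm{x},\bm{w}}(\bm{u})$. If $V^{(i)}\in\bm{X}$, the value is simply the intervened constant. Otherwise, since the graph of $\mathcal{M}$ is consistent with $\sigma$, the parents of $V^{(i)}$ lie in $\bm{V}_{\prec i}$. Fixing $\bm{V}_{\prec i}=\bm{w}$ therefore fixes all of them, and the interventions in $\bm{X}$ on later variables do not affect $V^{(i)}$. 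Formally, $f_{V^{(i)}}$ reads only parents in $\bm{V}_{\prec i}$ and $\bm{u}$, so $V^{(i)}_{\bm{x},\bm{w}}(\bm{u})=V^{(i)}_{\bm{w}}(\bm{u})$. The last expression is exactly the coordinate of $\bm{s}_\sigma$ indexed by $(i,\bm{w})$.

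By the inductive hypothesis, $\bm{w}$ is itself a function of $\bm{s}_\sigma$. Hence $V^{(i)}_{\bm{x}}(\bm{u})=g_{i,\bm{x}}(\bm{s}_\sigma)$ for a map $g_{i,\bm{x}}$ defined purely combinatorially: evaluate the variables in the order $\sigma$, clamp those in $\bm{X}$, and look up the signature entry otherwise. The definition of $\bm{s}_\sigma\models\gamma$ is then the conjunction of $g_{i^{(j)},\bm{x}^{(j)}}(\bm{s}_\sigma)=y^{(j)}$ over the atoms of $\gamma$. The lemma gives $\llbracket \bm{u}\models\gamma\rrbracket=\llbracket \bm{S}_\sigma(\bm{u})\models\gamma\rrbracket$.

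Substituting into Eq.~\eqref{eq:query_eval} and splitting the sum over $\bm{u}$ by the fibres of $\bm{u}\mapsto\bm{S}_\sigma(\bm{u})$ gives
\[
P_{\mathcal{M}}(\gamma)=\sum_{\bm{s}_\sigma}\llbracket \bm{s}_\sigma\models\gamma\rrbracket\sum_{\bm{u}:\bm{S}_\sigma(\bm{u})=\bm{s}_\sigma}P(\bm{u})=\sum_{\bm{s}_\sigma}\llbracket \bm{s}_\sigma\models\gamma\rrbracket P(\bm{s}_\sigma),
\]
which is Eq.~\eqref{eq:thm_linearity}.

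The main obstacle is the middle step: justifying $V^{(i)}_{\bm{x},\bm{w}}(\bm{u})=V^{(i)}_{\bm{w}}(\bm{u})$ when $\mathcal{M}$'s true parent set is a strict subset of $\bm{V}_{\prec i}$ and $\bm{X}$ contains variables later than $V^{(i)}$. I would argue this directly from the structural equations. In $\mathcal{M}_{\bm{x},\bm{w}}$ every input of $f_{V^{(i)}}$ is fixed either by $\bm{w}$ or by $\bm{u}$, and none is a later variable, so Proposition~\ref{prop:topological_invariance}, applied in the sub-model, absorbs the later interventions. The other delicate point is that $\bm{x}$ and $\bm{w}$ might conflict on overlapping variables. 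This cannot happen, because $\bm{w}$ is computed in $\mathcal{M}_{\bm{x}}$ and therefore already agrees with $\bm{x}$ on $\bm{X}\cap\bm{V}_{\prec i}$.
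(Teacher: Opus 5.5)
Your proof is correct, but it takes a different route from the paper's.

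\textbf{The paper's route.} It first invokes Thm.~\ref{thm:bounding_equivalence} to restrict, without loss of generality, to an SCM of the canonical PSCM $M_\sigma$. It then uses Lem.~\ref{lemma:exogenous_signature_bijection} to obtain a bijection $h:\mathcal{U}\to\bm{\mathcal{S}}_\sigma$. This gives $P(U=u)=P(\bm{S}_\sigma=h(u))$, so the sum in Eq.~\eqref{eq:query_eval} is simply re-indexed. The agreement $(u\models\gamma)\iff(h(u)\models\gamma)$ is asserted by appeal to the soundness of Halpern's axiomatisation. Lem.~\ref{lemma:signature_entailment} only shows that the signature-side evaluation is well defined; it does not show that this evaluation matches the SCM's own potential responses.

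\textbf{Your route.} You work in an arbitrary $\mathcal{M}\in\mathbb{M}_\sigma$. You treat $\bm{u}\mapsto\bm{S}_\sigma(\bm{u})$ as a possibly non-injective, non-surjective map, define $P(\bm{s}_\sigma)$ as its push-forward, and sum over fibres. Your induction, using composition together with $\mathrm{Pa}_{V^{(i)}}\subseteq\bm{V}_{\prec i}$, actually proves that $V^{(i)}_{\bm{x}}(\bm{u})$ coincides with the combinatorial evaluation of Lem.~\ref{lemma:signature_entailment}. That is precisely the step the paper leaves to a citation.

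\textbf{What each approach buys.} Your version needs neither canonicalisation nor a bijection, and it places no structural requirement on $\bm{U}$. It also sidesteps a somewhat loose step in the paper: Thm.~\ref{thm:bounding_equivalence} is a statement about minima, so using it to reduce a claim about every SCM in $\mathbb{M}_\sigma$ implicitly relies on canonicalisation preserving counterfactual distributions. What the paper's bijection buys is the converse direction. In the canonical model every signature is realised by some exogenous state. Hence any distribution $\bm{q}$ over $\bm{\mathcal{S}}_\sigma$ can be implemented by setting $P(u)=q_{h(u)}$, which is exactly what the tightness argument in Thm.~\ref{thm:exact_bounds} needs. For the linearity statement alone, your push-forward argument suffices and is the cleaner proof.
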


By viewing observational queries as a special case of counterfactual queries in which interventions are performed on the empty set, we also obtain the following additional result.
\begin{corollary}\label{cor:obs_consistency}
Any observational probability $P_{\mathcal{M}}(\bm{v})$ in an SCM $\mathcal{M} \in \mathbb{M}_{\sigma}$ can be written as a linear combination of the probabilities of the response signature, i.e.,
\begin{equation}\label{eq:cor_obs}
P_{\mathcal{M}}(\bm{v}) = \sum_{\bm{s}_{\sigma} \in \bm{\mathcal{S}}_\sigma} \llbracket \bm{s}_{\sigma} \models \bm{v} \rrbracket \cdot P(\bm{s}_\sigma) \,.
\end{equation}
\end{corollary}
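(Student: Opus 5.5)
This corollary should follow directly from Theorem~\ref{thm:ctf_linearity} once the observational event $\bm{v}$ is written as a counterfactual event with empty interventions. Concretely, I will write $\bm{v}=(v^{(1)},\dots,v^{(k)})$ as the conjunction $\gamma_{\bm{v}} \equiv v^{(1)}_{\emptyset}\land\dots\land v^{(k)}_{\emptyset}$. Each conjunct is an atomic proposition $Y_{\bm{x}}=y$ with $\bm{X}=\emptyset$. The sub-model $\mathcal{M}_{\emptyset}$ coincides with $\mathcal{M}$, so $V^{(i)}_{\emptyset}(\bm{u})=V^{(i)}(\bm{u})$ for every $\bm{u}$. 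Eq.~\eqref{eq:query_eval} then gives $P_{\mathcal{M}}(\gamma_{\bm{v}})=\sum_{\bm{u}}\llbracket \bm{u}\models\bm{v}\rrbracket P(\bm{u})=P_{\mathcal{M}}(\bm{v})$, the observational probability induced by $P(\bm{U})$ and $\mathcal{F}$. Since $\bm{V}_{\gamma_{\bm{v}}}=\bm{V}$, the event is within the scope assumed for Theorem~\ref{thm:ctf_linearity}. The empty intervention induces no precedence constraints, so any total order $\sigma$ is compatible with it.

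Applying Theorem~\ref{thm:ctf_linearity} to $\gamma_{\bm{v}}$ yields Eq.~\eqref{eq:cor_obs} directly, once the entailment $\bm{s}_\sigma\models\bm{v}$ is read as $\bm{s}_\sigma\models\gamma_{\bm{v}}$. The one step needing care is that this entailment is well defined and is the expected one. A signature configuration $\bm{s}_\sigma$ fixes, for every $i$ and every $\bm{v}_{\prec i}$, the value of $V^{(i)}_{\bm{v}_{\prec i}}$, so it acts as a deterministic response function. The natural values are then computed recursively along $\sigma$. First set $v^{(1)}$ to the component $V^{(1)}$, whose predecessor set is empty and which is therefore itself a signature entry. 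For $i>1$, set $v^{(i)}$ to the value of $V^{(i)}_{\bm{v}_{\prec i}}$ at the already computed predecessors $\bm{v}_{\prec i}$. The condition $\bm{s}_\sigma\models\bm{v}$ holds exactly when every one of these recursively generated values matches the corresponding $v^{(i)}$.

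The main obstacle is checking that this recursive reading agrees with what $\bm{u}$ induces, i.e.\ that $V^{(i)}(\bm{u})=V^{(i)}_{\bm{v}_{\prec i}(\bm{u})}(\bm{u})$. This is the composition/consistency property of recursive SCMs. I would prove it by induction along $\sigma$: intervening on predecessors at the values they would take anyway leaves the structural equation of $V^{(i)}$ unchanged. The property should already be implicit in the proof of Theorem~\ref{thm:ctf_linearity}, so the corollary follows by simple specialisation.
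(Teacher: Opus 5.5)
Your proposal is correct and follows the paper's own route: the paper obtains the corollary by viewing $\bm{v}$ as a counterfactual event with empty interventions and specialising Thm.~\ref{thm:ctf_linearity}. The extra composition check you flag, $V^{(i)}(\bm{u})=V^{(i)}_{\bm{v}_{\prec i}(\bm{u})}(\bm{u})$, is what the paper leaves implicit in the step $(u\models\gamma)\iff(h(u)\models\gamma)$ of that theorem's proof (via Lem.~\ref{lemma:signature_entailment} and Lem.~\ref{lemma:exogenous_signature_bijection}), so spelling it out is a harmless and welcome addition.
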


\paragraph{Linear Programming (LP) Formulation.}
Thm.~\ref{thm:ctf_linearity} and Cor.~\ref{cor:obs_consistency} allow us to address the optimisation task in Eq.~\eqref{eq:partial_identification_opt_fixed_order} via the signature joint states. The resulting linearity of the constraints and the objective function allows us to rewrite the bounding task as a LP task:
\begin{equation} 
\label{eq:lp}
\begin{split}
\min \quad
& \sum_{\bm{s}_{\sigma} \in \bm{\mathcal{S}}_\sigma} \llbracket \bm{s}_{\sigma} \models \gamma \rrbracket \cdot q_{\bm{s}_{\sigma}} \\
\text{s.t.} \quad & \sum_{\bm{s}_{\sigma} \in \bm{\mathcal{S}}_\sigma} \llbracket \bm{s}_{\sigma} \models \bm{v} \rrbracket \cdot q_{\bm{s}_{\sigma}} = \hat{P}(\bm{v}) \,, \forall \bm{v}\in\bm{\mathcal{V}} \;; \quad \sum_{\bm{s}_{\sigma} \in \bm{\mathcal{S}}_\sigma} q_{\bm{s}_{\sigma}} = 1 \;; \quad q_{\bm{s}_{\sigma}} \geq 0 \,,\, \forall \bm{s}_{\sigma} \in \bm{\mathcal{S}}_{\sigma} \,.
\end{split}
\end{equation}
with semantics $q_{\bm{s}_{\sigma}}=P(\bm{s}_\sigma)$ for the optimisation variables.

The following result establishes that the above LP task solves the relaxed optimisation over SCMs consistent with $\sigma$ by yielding \emph{tight} identification bounds. 

\begin{theorem}\label{thm:exact_bounds}
The LP task in Eq.~\eqref{eq:lp} gives the solution of Eq.~\eqref{eq:partial_identification_opt_fixed_order}. Furthermore, there exists an SCM $\mathcal{M} \in \mathbb{M}_{\sigma}$ such that $P_{\mathcal{M}}(\gamma)$ coincides with the solution of the LP.
\end{theorem}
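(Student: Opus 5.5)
We prove the two inequalities between the LP optimum and the minimum in Eq.~\eqref{eq:partial_identification_opt_fixed_order} separately; the witness SCM for the second claim comes out of the reverse inequality. By Thm.~\ref{thm:bounding_equivalence}, we may replace $\mathbb{M}_\sigma$ with $\mathbb{M}_{M_\sigma}$ throughout, i.e., work only with SCMs over the complete canonical graph with a single exogenous $U$.

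\emph{LP value $\le$ minimum.} Take any feasible $\mathcal{M}\in\mathbb{M}_\sigma$ with $P_{\mathcal{M}}(\bm{V})=\hat P(\bm{V})$. Set $q_{\bm{s}_\sigma}\coloneq P_{\mathcal{M}}(\bm{s}_\sigma)$, the joint law that $P(\bm{U})$ induces on the potential outcomes in $\bm{S}_\sigma$. This is a probability vector, so nonnegativity and normalisation hold. By Cor.~\ref{cor:obs_consistency}, the observational constraints of Eq.~\eqref{eq:lp} become exactly $P_{\mathcal{M}}(\bm{v})=\hat P(\bm{v})$. By Thm.~\ref{thm:ctf_linearity}, the LP objective equals $P_{\mathcal{M}}(\gamma)$. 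Hence every feasible SCM yields a feasible LP point with the same value.

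\emph{Minimum $\le$ LP value, with a witness.} Let $q^\ast$ be an optimal LP solution; it exists because the feasible set is a nonempty compact polytope. Feasibility holds because $\hat P$ is realisable, e.g.\ by the deterministic factorisation along $\sigma$. We build $\mathcal{M}^\ast\in\mathbb{M}_{M_\sigma}$ as follows:
\begin{itemize}[leftmargin=*]
\item take a single exogenous $U$ whose state space is $\bm{\mathcal{S}}_\sigma$, with $P(U=\bm{s}_\sigma)\coloneq q^\ast_{\bm{s}_\sigma}$;
\item define $f_{V^{(i)}}(\bm{v}_{\prec i},\bm{s}_\sigma)$ as the component of $\bm{s}_\sigma$ indexed by $V^{(i)}_{\bm{v}_{\prec i}}$.
\end{itemize}
This is recursive and consistent with $\sigma$. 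It is also a sub-instance of the canonical PSCM, since canonical $U$ indexes all response functions and we can map each $\bm{s}_\sigma$ to its tuple of mechanisms, assigning zero mass to unused states.

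We then check that the potential outcomes in $\mathcal{M}^\ast$ reproduce the signature. For $U=\bm{s}_\sigma$ and any intervention $\bm{x}$, evaluate the sub-model recursively along $\sigma$. Each non-intervened $V^{(i)}$ reads its entry $V^{(i)}_{\bm{v}_{\prec i}}$ at the realised predecessor values. So $(\mathcal{M}^\ast,\bm{s}_\sigma)\models\gamma$ iff $\llbracket\bm{s}_\sigma\models\gamma\rrbracket=1$, where the latter entailment is defined by the same recursive evaluation. Eq.~\eqref{eq:query_eval} then gives $P_{\mathcal{M}^\ast}(\gamma)=\sum_{\bm{s}_\sigma}\llbracket\bm{s}_\sigma\models\gamma\rrbracket q^\ast_{\bm{s}_\sigma}$, which is the LP optimum. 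The same argument with $\bm{x}=\emptyset$ shows that $P_{\mathcal{M}^\ast}(\bm{V})=\hat P(\bm{V})$, so $\mathcal{M}^\ast$ is feasible.

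Combining both directions, the LP optimum equals the minimum in Eq.~\eqref{eq:partial_identification_opt_fixed_order}, and $\mathcal{M}^\ast$ attains it.

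The main obstacle is the consistency step above: showing that the entailment $\llbracket\bm{s}_\sigma\models\gamma\rrbracket$ used in Thm.~\ref{thm:ctf_linearity} coincides with actual model evaluation in $\mathcal{M}^\ast$. This needs care for interventions on sets $\bm{x}$ that are not closed under predecessors, where intervened variables override their responses. An induction on the position $i$ in $\sigma$ handles it. The remaining step, embedding $\mathcal{M}^\ast$ in $\mathbb{M}_{M_\sigma}$, is bookkeeping.
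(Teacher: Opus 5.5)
Your proposal is correct and follows essentially the paper's proof. The paper likewise gets $L \le L^\ast$ by pushing an SCM's exogenous law forward to a feasible signature distribution via Thm.~\ref{thm:ctf_linearity} and Cor.~\ref{cor:obs_consistency}, and $L^\ast \le L$ by placing the LP optimum on the canonical model's exogenous states through the bijection $h$ of Lem.~\ref{lemma:exogenous_signature_bijection}; that is exactly your $U$ with state space $\bm{\mathcal{S}}_\sigma$ and read-off mechanisms, and your recursive-evaluation check is the content of Lem.~\ref{lemma:signature_entailment}.

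Your version is slightly more careful: it argues over every feasible SCM rather than presupposing a minimiser, and it explicitly justifies LP feasibility and attainment.
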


\paragraph{Signature State Partitioning.}
The latent space reduction to finite response-function classes \citep{balkeCounterfactualProbabilitiesComputational1994, duarte2024automated} enables us to neatly solve Eq.~\eqref{eq:partial_identification_opt_fixed_order} through Eq.~\eqref{eq:lp}. However, a further reduction can be achieved by observing that the LP in Eq.~\eqref{eq:lp} depends exclusively on the Boolean evaluations of the signatures against the observations $\bm{v}$ and the counterfactual event $\gamma$. Signatures states that evaluate identically against these observations and events can be grouped into equivalence classes $\bm{t}_\sigma \in \bm{\mathcal{T}}_\sigma \coloneq \bm{\mathcal{S}}_\sigma / \sim_\gamma$, where:
\begin{equation}
\label{eq:signature_equivalence}
\bm{s}_\sigma \sim_\gamma \bm{s}'_\sigma \iff \left((\bm{s}_\sigma \models \gamma) \equiv (\bm{s}'_\sigma \models \gamma)\right) \land \left( \exists \bm{v} \in \bm{\mathcal{V}}\,\,\text{s.t.}\,\,(\bm{s}_\sigma \models \bm{v}) \land (\bm{s}'_\sigma \models \bm{v})\right) \,.
\end{equation}
Each class $\bm{t}_\sigma \in \bm{\mathcal{T}}_\sigma$ inherits the binary evaluations from the signatures it includes, hence $\llbracket \bm{t}_\sigma \models \bm{v} \rrbracket$ and $\llbracket \bm{t}_\sigma \models \gamma \rrbracket$. By defining aggregated masses as variables, $q_{\bm{t}_\sigma} \coloneqq \sum_{\bm{s}_\sigma \in \bm{t}_\sigma} q_{\bm{s}_\sigma}$, and applying Thm.~\ref{thm:ctf_linearity} and Cor.~\ref{cor:obs_consistency}, the LP in Eq.~\eqref{eq:lp} reduces to a \emph{lifted} LP task \citep{kerstingRelationalLinearProgramming2017}:
\begin{equation}\label{eq:lp_reduced}
\begin{split}
\min \quad
& \sum_{\bm{t}_\sigma \in \bm{\mathcal{T}}_\sigma} \llbracket \bm{t}_\sigma \models \gamma \rrbracket \cdot q_{\bm{t}_\sigma} \\
\text{s.t.} \quad & \sum_{\bm{t}_\sigma \in \bm{\mathcal{T}}_\sigma} \llbracket \bm{t}_\sigma \models \bm{v} \rrbracket \cdot q_{\bm{t}_\sigma} = \hat{P}(\bm{v})\,, \forall \bm{v}\in\bm{\mathcal{V}} \;; \quad \sum_{\bm{t}_\sigma \in \bm{\mathcal{T}}_\sigma} q_{\bm{t}_\sigma} = 1 \;; \quad q_{\bm{t}_\sigma} \geq 0\,, \forall \bm{t}_\sigma \in \bm{\mathcal{T}}_\sigma\,.
\end{split}
\end{equation}
The equivalence mapping improves the tractability of the LP in Eq.~\eqref{eq:lp} whilst maintaining both soundness and tightness, as it follows from the following result.
\begin{theorem}
\label{thm:reduced_lp_equivalence}
The lifted LP task in Eq.~\eqref{eq:lp_reduced} gives the solution of Eq.~\eqref{eq:partial_identification_opt_fixed_order}. Furthermore, there exists an SCM $\mathcal{M} \in \mathbb{M}_{\sigma}$ such that $P_{\mathcal{M}}(\gamma)$ coincides with the solution of the LP.
\end{theorem}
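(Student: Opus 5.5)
The plan is to reduce Thm.~\ref{thm:reduced_lp_equivalence} to Thm.~\ref{thm:exact_bounds}. The key step is showing that the lifted LP in Eq.~\eqref{eq:lp_reduced} and the full LP in Eq.~\eqref{eq:lp} have the same optimal value. Once that is done, both claims (solving Eq.~\eqref{eq:partial_identification_opt_fixed_order}, and existence of a witnessing SCM $\mathcal{M}\in\mathbb{M}_\sigma$) follow directly from Thm.~\ref{thm:exact_bounds}.

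First I will check that $\bm{\mathcal{T}}_\sigma$ is a genuine partition, with well-defined inherited evaluations. Each $\bm{s}_\sigma$ determines a unique factual configuration $\bm{v}$ with $\bm{s}_\sigma\models\bm{v}$: apply the response functions $V^{(i)}_{\bm{v}_{\prec i}}$ in the order $\sigma$. Hence the relation in Eq.~\eqref{eq:signature_equivalence} reduces to "same $\gamma$-evaluation and same induced $\bm{v}$". This is reflexive, symmetric and transitive, so it is an equivalence relation. Inside a class, $\llbracket\bm{s}_\sigma\models\gamma\rrbracket$ and $\llbracket\bm{s}_\sigma\models\bm{v}\rrbracket$ are constant for every $\bm{v}$, so $\llbracket\bm{t}_\sigma\models\cdot\rrbracket$ is well defined. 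This step is the main subtlety: the definition quantifies existentially over $\bm{v}$, and transitivity genuinely relies on uniqueness of the induced observation. I will therefore prove that uniqueness carefully, using recursiveness under $\sigma$ and the definition of entailment from Eq.~\eqref{eq:query_eval}.

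Second, I will map solutions of the full LP to the lifted LP. Given a feasible $q$ for Eq.~\eqref{eq:lp}, set $q_{\bm{t}_\sigma}=\sum_{\bm{s}_\sigma\in\bm{t}_\sigma}q_{\bm{s}_\sigma}$. Since every coefficient in Eq.~\eqref{eq:lp} is constant on classes, each constraint sum and the objective regroup term by term over the partition. So the aggregated vector is feasible for Eq.~\eqref{eq:lp_reduced}, is nonnegative, sums to one, and has the same objective value. This gives (lifted optimum) $\le$ (full optimum).

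Third, I will map solutions back. Given a feasible $q_{\bm{t}_\sigma}$, pick any representative $\bm{s}^*_\sigma\in\bm{t}_\sigma$, put $q_{\bm{s}^*_\sigma}=q_{\bm{t}_\sigma}$, and set all other signatures in the class to zero; a uniform split would work equally well. By the same regrouping, this disaggregated vector is feasible for Eq.~\eqref{eq:lp} with equal objective, which gives the reverse inequality. The two optima therefore coincide. In addition, the disaggregation of a lifted optimiser is an optimiser of Eq.~\eqref{eq:lp}. Applying the witness construction of Thm.~\ref{thm:exact_bounds} to it then yields an SCM in $\mathbb{M}_\sigma$ (for example, the canonical $M_\sigma$ with $P(U)$ given by these signature masses) that matches $\hat P(\bm{V})$ and attains the lifted optimum. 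The maximisation case is identical.
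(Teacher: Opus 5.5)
Your proposal is correct and follows essentially the same route as the paper. Both arguments aggregate a feasible $q$ of Eq.~\eqref{eq:lp} over classes and disaggregate a feasible lifted solution back; the paper uses the uniform split $q_{\bm{t}_\sigma}/|\bm{t}_\sigma|$, and your single-representative choice works equally well. Both then conclude that the optima coincide and obtain the witness SCM from the construction in Thm.~\ref{thm:exact_bounds}. Your explicit check that $\sim_\gamma$ is an equivalence relation, using uniqueness of the observational configuration induced by each signature, is a step the paper leaves implicit and is a sound addition.
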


\paragraph{Marginalising Non-Queried Variables.}
So far, we assumed that all the endogenous variables to appear in the query, i.e., $\bm{V}_\gamma=\bm{V}$. We now extend our results to the more general case in which $\bm{V}_\gamma\subseteq\bm{V}$. To this end, let $\mathbb{M}_{\sigma}^{\bm{V}_\gamma}$ denote the set of SCMs defined only over $\bm{V}_\gamma$ and compatible with $\sigma$. Observe that $\mathbb{M}_{\sigma}^{\bm{V}}=\mathbb{M}_{\sigma}$ and, in general, $\mathbb{M}_{\sigma}^{\bm{V}_\gamma}\subseteq \mathbb{M}_{\sigma}$, where, for notational ease, we assumed that the endogenous variables might not appear explicitly in an SCM, thus being uniformly distributed. The following result holds.
\begin{theorem}\label{thm:opt_equivalence_marginal}
The solution of Eq.~\eqref{eq:partial_identification_opt_fixed_order} coincides with:
\begin{equation}\label{eq:partial_identification_opt_fixed_order_marginal}
    \underset{\mathcal{M} \in \mathbb{M}_{\sigma}^{\bm{V}_\gamma}}{\min} P_{\mathcal{M}}(\gamma) \quad\text{s.t.} \quad P_{\mathcal{M}}(\bm{V}_\gamma) = \hat{P}(\bm{V}_\gamma)\,.
\end{equation}
where $\hat{P}(\bm{V}_\gamma)$ is obtained from $\hat{P}(\bm{V})$ by marginalisation. 
\end{theorem}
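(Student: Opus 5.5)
The plan is to prove the two inequalities between the optimum of Eq.~\eqref{eq:partial_identification_opt_fixed_order} (call it $m$) and that of Eq.~\eqref{eq:partial_identification_opt_fixed_order_marginal} (call it $m_\gamma$). Both sides only involve potential outcomes whose subscripts and outcomes lie in $\bm{V}_\gamma$, and $\sigma$ restricts to a total order $\sigma_\gamma$ on $\bm{V}_\gamma$. The whole argument therefore reduces to showing that marginalising and extending SCMs preserve compatibility with the order, the query value, and the observational fit.

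\textbf{Step 1: $m \le m_\gamma$ (extension).} Take any $\mathcal{M}' \in \mathbb{M}_\sigma^{\bm{V}_\gamma}$ with $P_{\mathcal{M}'}(\bm{V}_\gamma)=\hat{P}(\bm{V}_\gamma)$. I will extend it to an SCM $\mathcal{M}$ over $\bm{V}$ compatible with $\sigma$ and reproducing the full $\hat{P}(\bm{V})$. I add an independent exogenous variable $U'$ and, for each $W \in \bm{V}\setminus\bm{V}_\gamma$, let $W$ take as endogenous parents all its $\sigma$-predecessors. Its structural equation samples $W$ from $\hat{P}(w \mid \bm{v}_{\prec W})$ via $U'$, using for instance a uniform $U'$ with inverse-CDF mechanisms, one component per variable.

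The equations of the variables in $\bm{V}_\gamma$ are kept unchanged, so they ignore the added variables. Then three things follow:
\begin{itemize}
\item by chain-rule factorisation along $\sigma$, the observational joint of $\mathcal{M}$ equals $\hat{P}(\bm{V})$, because the $\bm{V}_\gamma$ part reproduces the marginal and each $W$ adds the correct conditional;
\item every potential response $Y_{\bm{x}}(\bm{u})$ with $Y,\bm{X}\subseteq\bm{V}_\gamma$ is unchanged, since the equations of $\bm{V}_\gamma$ do not read the new variables;
\item consequently, by Eq.~\eqref{eq:query_eval}, $P_{\mathcal{M}}(\gamma)=P_{\mathcal{M}'}(\gamma)$.
\end{itemize}

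\textbf{The subtlety in Step 1.} The conditional $\hat P(w\mid \bm{v}_{\prec W})$ in general depends on variables later in $\sigma$ than some $\bm{V}_\gamma$ variables. That is fine, because $W$'s parents are exactly its predecessors, so the order is respected. However, the $\bm{V}_\gamma$ variables after $W$ must then have the correct joint conditional given $W$. So I actually need the chain rule applied over all of $\bm{V}$ in order $\sigma$, not just a bolt-on of $W$.

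To handle this, I will do the construction inside the canonical model $M_\sigma$ via Thm.~\ref{thm:bounding_equivalence} and Thm.~\ref{thm:ctf_linearity}. Let $q'$ be the signature distribution of $\mathcal{M}'$ over $\bm{S}_{\sigma_\gamma}$. I define a distribution over $\bm{S}_\sigma$ in which:
\begin{itemize}
\item each $V\in\bm{V}_\gamma$ has responses $V_{\bm{v}_{\prec}}$ depending only on the $\bm{V}_\gamma$-coordinates of $\bm{v}_{\prec}$, with joint law given by $q'$;
\item each $W$ has response functions drawn so that $P(w\mid\bm{v}_{\prec W})$ equals the required conditional.
\end{itemize}
This requires the conditional to be $\hat P(w\mid \bm{v}_{\prec W})$ where $\bm{v}_{\prec W}$ includes the earlier $\bm{V}_\gamma$ variables. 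But then later $\bm{V}_\gamma$ variables cannot depend on $w$, which forces a conditional independence that $\hat P$ need not satisfy.

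\textbf{This is the main obstacle.} My fix is to let the later $\bm{V}_\gamma$ variables' responses depend on $W$ too. I will couple them in the signature space: choose a joint over the $\bm{V}_\gamma$ responses and the $W$ responses such that the $\bm{V}_\gamma$-responses, evaluated at the realised $W$ values, have the same counterfactual law as under $q'$ on events not involving $W$. The device I would try first is to make the $W$-dependence appear only through the realised factual value. Concretely, index the responses of $\bm{V}_\gamma$ variables as functions of $(\bm{v}_\gamma{}_{\prec},w)$ and draw them conditionally so that, marginalising over the factual $W$, the $\bm{V}_\gamma$-only response is a mixture reproducing $q'$. If this works, it keeps the $\gamma$-evaluation intact while matching $\hat P(\bm V)$.

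\textbf{Step 2: $m_\gamma \le m$ (marginalisation).} Given feasible $\mathcal{M}\in\mathbb{M}_\sigma$, I build $\mathcal{M}'$ over $\bm{V}_\gamma$ by substituting, in each equation of a $\bm{V}_\gamma$ variable, every unqueried parent $W$ by its own structural equation, recursively; that is, the latent projection. The new equations are functions of earlier $\bm{V}_\gamma$ variables and $\bm{U}$, so $\mathcal{M}'$ is compatible with $\sigma_\gamma$. Interventions on $\bm{X}\subseteq\bm{V}_\gamma$ commute with this substitution, so every $Y_{\bm{x}}(\bm{u})$ is preserved. Hence $P_{\mathcal{M}'}(\gamma)=P_{\mathcal{M}}(\gamma)$ and $P_{\mathcal{M}'}(\bm{V}_\gamma)$ is the marginal of $\hat P(\bm{V})$. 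This direction is routine.

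Combining Steps 1 and 2 gives $m = m_\gamma$.
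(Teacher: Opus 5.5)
Your Step 2 is correct. It is the paper's first half made explicit: the paper simply appeals to marginalising the unqueried variables out of an SCM, and your recursive substitution, together with the remark that interventions on $\bm V_\gamma$ commute with it, is that argument.

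Your first attempt at Step 1 is essentially the paper's second half. You are right that it fails when some unqueried $W$ precedes a queried variable in $\sigma$. With independent noise and $\bm V_\gamma$-equations that ignore $W$, you force $W$ to be independent of the later queried variables given the earlier ones. For example, $\sigma=(W,X)$ with $\bm V_\gamma=\{X\}$ yields $\hat P(w)\hat P(x)$. The paper's own augmentation, which makes all of $\bm V_\gamma$ parents of $\bm V'$, fits $\hat P(\bm V)$ but leaves $\mathbb{M}_\sigma$ in exactly this case.

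The genuine gap is that your repair is never carried out: it ends with ``if this works''. As described, it also fails in general. You keep $W$'s $\sigma$-predecessors as its parents. So if $W$ has a queried ancestor $X$ that is intervened on in $\gamma$, the value $W_x$ changes across the worlds of $\gamma$. Then ``the $\bm V_\gamma$-responses evaluated at the realised $W$'' is not a single object. Nothing guarantees that the joint law of the $\bm V_\gamma$ potential responses is still $q'$.

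The missing idea is to route all dependence between $\bm V'$ and $\bm V_\gamma$ through the shared exogenous variable rather than through edges. The construction is as follows.
\begin{itemize}
\item Keep every equation of $\bm V_\gamma$ exactly as in $\mathcal{M}'$.
\item Give every $W\in\bm V'$ no endogenous parents.
\item Replace the exogenous $U$ of $\mathcal{M}'$ by $\tilde U=(U,\bm V')$ with $P(u,\bm v')\coloneq P(u)\,\hat P(\bm v'\mid\bm v_\gamma(u))$. Here $\bm v_\gamma(u)$ is the factual configuration $\mathcal{M}'$ produces at $u$; any conditional may be used where $\hat P(\bm v_\gamma)=0$.
\item Let $f_W$ read off the $W$-coordinate of $\tilde u$.
\end{itemize}
Three things then hold.
\begin{itemize}
\item The graph is that of $\mathcal{M}'$ plus endogenously isolated nodes, so it is compatible with every $\sigma$.
\item Each $Y_{\bm x}(\tilde u)$ with $Y,\bm X\subseteq\bm V_\gamma$ equals $Y_{\bm x}(u)$ in $\mathcal{M}'$, so $P(\gamma)$ is unchanged.
\item $P(\bm v)=\sum_u P(u)\llbracket \bm v_\gamma(u)=\bm v_\gamma\rrbracket\,\hat P(\bm v'\mid\bm v_\gamma)=\hat P(\bm v_\gamma)\,\hat P(\bm v'\mid\bm v_\gamma)=\hat P(\bm v)$.
\end{itemize}
In signature terms, this is your ``draw them conditionally'' with the $\bm V_\gamma$-responses constant in the $W$-coordinates. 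Any genuine $w$-dependence becomes superfluous once $W$ no longer responds to interventions. With this version of Step 1, your two steps prove the theorem for every $\sigma$, including the orders the paper's argument does not cover.
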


As a consequence of the above result, when $\bm{V}_\gamma\subset\bm{V}$, we marginalise the empirical distribution before applying the LP formulation to solve Eq.~\eqref{eq:partial_identification_opt_fixed_order_marginal} and hence Eq.~\eqref{eq:partial_identification_opt_fixed_order}. 

\paragraph{Coping with Partial Orders.} The optimisation task in Eq.~\eqref{eq:partial_identification_opt_fixed_order} assumes a total order $\sigma$ over the endogenous variables. However, as discussed in Sect.~\ref{sec:background}, a counterfactual query over $\gamma$ might induce only a partial order $\prec_\gamma$. Nevertheless, a general formulation is readily obtained by considering the bound over all linear extensions $\Sigma(\prec_\gamma)$:
\begin{equation}
\label{eq:partial_identification_over_orders}
    \underset{\sigma \in \Sigma(\prec_\gamma)}{\min} \, \underset{\mathcal{M} \in \mathbb{M}^{\bm{V}_\gamma}_{\sigma}}{\min} P_{\mathcal{M}}(\gamma) \quad\text{s.t.} \quad P_{\mathcal{M}}(\bm{V}_\gamma) = \hat{P}(\bm{V}_\gamma) \,.
\end{equation}
When $\gamma$ contains no interventions, $\prec_\gamma$ imposes no ordering constraints, so that all possible total orders must be considered, and we have $|\Sigma(\prec_\gamma)| = |\bm{V}_\gamma|!$. Interventions restrict this search space by ruling out inconsistent orderings; nonetheless, the number of linear extensions can still scale exponentially with respect to $|\bm{V}_\gamma|$. In spite of this, the following result establishes that evaluating an arbitrary linear extension is sufficient, thus eliminating the need for a brute-force optimisation over all the orders in $\Sigma(\prec_\gamma)$.

\begin{theorem}
\label{thm:order_invariance}
Consider a counterfactual query over $\gamma$ inducing a partial order $\prec_\gamma$. Any total order $\sigma \in \Sigma(\prec_\gamma)$ gives the same minimum in Eq.~\eqref{eq:partial_identification_opt_fixed_order_marginal} and, hence, the solution of Eq.~\eqref{eq:partial_identification_over_orders}.
\end{theorem}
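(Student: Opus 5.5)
The plan is to avoid comparing linear extensions pairwise. Instead, I would show that for every $\sigma\in\Sigma(\prec_\gamma)$ the optimisation in Eq.~\eqref{eq:partial_identification_opt_fixed_order_marginal} equals one \emph{order-free} quantity $L^\star$, defined directly on the counterfactual variables mentioned in $\gamma$. Write $\mathcal{A}_\gamma$ for the collection of counterfactual variables $Y_{\bm{x}}$ that occur in $\gamma$, together with the factual variables $\bm{V}_\gamma$. Then let $L^\star$ be the minimum of $Q(\gamma)$ over all joint distributions $Q$ on $\mathcal{A}_\gamma$ satisfying two conditions: (a) the $\bm{V}_\gamma$-marginal of $Q$ equals $\hat{P}(\bm{V}_\gamma)$; (b) $Q$ is supported on joint states that obey \emph{consistency} (composition). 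Consistency means: whenever $\bm{x}$ is the factual value of $\bm{X}$, we require $Y_{\bm{x}}=Y$; and, for nested subscripts $\bm{x}\subseteq\bm{x}'$, whenever the values of $\bm{X}'\setminus\bm{X}$ under $\bm{x}$ agree with $\bm{x}'$, we require $Y_{\bm{x}}=Y_{\bm{x}'}$. Conditions (a) and (b) refer only to $\gamma$ and the data, not to $\sigma$. Once $\min_{\mathcal{M}\in\mathbb{M}^{\bm{V}_\gamma}_\sigma}P_\mathcal{M}(\gamma)=L^\star$ is established for every linear extension, the theorem follows, and so does the claim about Eq.~\eqref{eq:partial_identification_over_orders}, since the outer minimum is then over identical values.

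The inequality ``$\geq$'' is the easy direction. Any $\mathcal{M}\in\mathbb{M}^{\bm{V}_\gamma}_\sigma$ induces, through Eq.~\eqref{eq:query_eval}, a joint law on $\mathcal{A}_\gamma$. This law satisfies (a) by the data constraint. It satisfies (b) because consistency and composition hold pointwise in $\bm{u}$ in any recursive SCM, and Prop.~\ref{prop:topological_invariance} gives the remaining identities involving non-ancestors. Hence the SCM minimum is at least $L^\star$, and this holds for every $\sigma$.

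The inequality ``$\leq$'' requires a construction: given a feasible $Q$ and $\sigma\in\Sigma(\prec_\gamma)$, I would build an SCM in $\mathbb{M}^{\bm{V}_\gamma}_\sigma$ reproducing $Q$ on $\mathcal{A}_\gamma$. The natural choice follows the spirit of Thm.~\ref{thm:exact_bounds}. Take a single exogenous $U$ whose states are the joint states $a$ of $\mathcal{A}_\gamma$, with $P(U=a)=Q(a)$. For $V^{(i)}$ with predecessors $\bm{v}_{\prec i}$, let the structural equation select, among the entries of $a$ concerning $V^{(i)}$ (the factual one and each $V^{(i)}_{\bm{x}}$), an entry whose intervention set $\bm{X}$ lies in $\bm{V}_{\prec i}$ and whose setting $\bm{x}$ agrees with $\bm{v}_{\prec i}$ on $\bm{X}$. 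This is possible because $\sigma$ extends $\prec_\gamma$, so every intervened variable precedes the outcome. When several entries qualify, take the one with the largest such $\bm{X}$; when none does, take the factual entry. One then checks by induction along $\sigma$ that evaluating $\mathcal{M}_{\bm{x}}$ at $U=a$ returns exactly the entries of $a$ for all atoms in $\gamma$ and for the factual variables. Hence $P_\mathcal{M}(\gamma)=Q(\gamma)$ and $P_\mathcal{M}(\bm{V}_\gamma)=\hat{P}(\bm{V}_\gamma)$.

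The main obstacle is this inductive check. Inside $\mathcal{M}_{\bm{x}}$, the parent configuration seen by $V^{(i)}$ may agree with several subscripts of atoms on $V^{(i)}$, or with none. In those cases we must argue, using the consistency and composition constraints in (b), that all qualifying entries coincide, so that the selection rule is unambiguous. A further subtlety concerns ancestors of $V^{(i)}$ that carry no intervention in $\bm{x}$ but whose values were chosen under a different regime. If the plain selection rule fails, I would first enlarge $\mathcal{A}_\gamma$ to its closure under the intermediate counterfactuals generated by the rule. I would then extend $Q$ to this closure by a coupling that copies the factual or most-specific value, so that (b) continues to hold, and finally rerun the induction. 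I expect this bookkeeping, and not the optimisation argument, to carry the real weight of the proof.
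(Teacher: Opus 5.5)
Your route differs from the paper's. The paper works in the lifted LP, whose classes are indexed by $\langle\bm{v},b\rangle$, and argues that which classes are non-empty does not depend on $\sigma$. You instead pass through an order-free relaxation $L^\star$ and try to show it is tight for every $\sigma$. The ``$\geq$'' half is fine. The ``$\leq$'' half, however, rests on a claim that is false in general: a joint state of $\mathcal{A}_\gamma$ obeying (b) need not be realisable in $\mathbb{M}^{\bm{V}_\gamma}_\sigma$, and whether it is depends on $\sigma$.

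Take $\gamma=(Y_x=1)\wedge(Y_{x,w}=0)$. Then $\prec_\gamma=\{X\prec Y,\,W\prec Y\}$, with linear extensions $(X,W,Y)$ and $(W,X,Y)$. Consider the state $a$ with factual $X=x'\neq x$, $W=w$, and $Y_x=1$, $Y_{x,w}=0$. Consistency is vacuous because $X\neq x$ factually. The composition clause relating $Y_x$ to $Y_{x,w}$ would need $W_x$, which is not in $\mathcal{A}_\gamma$. So $a$ obeys (b).
\begin{itemize}
\item Under $(X,W,Y)$, $a$ is realisable: take $f_W(x)=w'\neq w$, $f_Y(x,w')=1$, $f_Y(x,w)=0$.
\item Under $(W,X,Y)$, $W$ precedes $X$, so $W_x=W=w$ pointwise and $Y_x=Y_{x,w}$ in every model. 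Your selection rule shows exactly this: it returns $Y_{x,w}=0$ at the parent configuration $(w,x)$, hence $Y_x=0$ in $\mathcal{M}_x$.
\end{itemize}
No closure or coupling step can repair this, since no value of $W_x$ is admissible under $(W,X,Y)$. Similarly, with binary $W$, the state $Y_x=0$, $Y_{x,w}=Y_{x,w'}=1$ obeys (b) but has no extension under any order, because $Y_x=Y_{x,W_x}$. Realisability is governed by $\sigma$-dependent exclusions and by constraints through unmentioned intermediate counterfactuals. So no order-free pointwise condition on $\mathcal{A}_\gamma$ is tight for all $\sigma$. Indeed, with data concentrated on $X=x'$, $W=w$, the \emph{maximum} of $P(\gamma)$ is $1$ under $(X,W,Y)$ and $0$ under $(W,X,Y)$. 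Your argument never uses that one is minimising, so it would wrongly make the maximum order-invariant too.

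The missing idea is to exploit the minimisation. The data fix only the factual marginal, so the minimum under $\sigma$ equals the empirical mass of those $\bm{v}$ for which every deterministic model compatible with $\sigma$ that reproduces $\bm{v}$ factually satisfies $\gamma$. You therefore need, for each $\bm{v}$, one $\gamma$-violating witness whenever one exists, not realisability of every feasible $Q$. An atom $Y_{\bm{x}}=y$ is forced true by $\bm{v}$ exactly when $\bm{x}=\bm{v}_{\bm{X}}$ and $v_Y=y$:
\begin{itemize}
\item If $\bm{x}=\bm{v}_{\bm{X}}$, consistency gives $Y_{\bm{x}}=v_Y$.
\item Otherwise, pick $X\in\bm{X}$ with $x\neq v_X$. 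In any $\sigma\in\Sigma(\prec_\gamma)$, $X$ precedes $Y$ and is therefore a parent of $Y$ in $M_\sigma$. Set $f_Y$ to a constant $y'\neq y$ on all non-factual parent configurations; this preserves $\bm{v}$ and falsifies the atom.
\end{itemize}
Hence the minimum equals $\sum_{\bm{v}}\hat{P}(\bm{v})\,\llbracket\bm{v}\text{ forces every atom of }\gamma\rrbracket$ for every linear extension.

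This is what the paper's proof amounts to: for the minimum, only the non-emptiness of the classes $\langle\bm{v},0\rangle$ matters. The example above shows that the classes $\langle\bm{v},1\rangle$ can depend on $\sigma$, so they should not be relied on. In your framework, the repair is to realise one optimal $Q$ of this form rather than every feasible $Q$.
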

As a consequence of Thm.~\ref{thm:order_invariance}, we are not required to assume or enforce any specific total order. The choice of a particular $\sigma$ serves only as a possible parametrisation of the task.

\paragraph{Complexity.}
Thm.~\ref{thm:exact_bounds} enables us to solve the optimisation problem in Eq.~\eqref{eq:partial_identification_opt_fixed_order} using a linear solver that implements the LP defined in Eq.~\eqref{eq:lp}. It is straightforward to verify that, if the number of variables in the query is $k \coloneq |\bm{V}_{\gamma}|$ (see Thm.~\ref{thm:opt_equivalence_marginal}) and $n \coloneq \max_{i=1}^k |\mathcal{V}^{(i)}|$ denotes the maximum endogenous cardinality, the number of potential outcomes comprising the signature $\bm{S}_\sigma$ in Eq.~\eqref{eq:signature} is bounded by $\sum_{i=0}^{k-1} n^i = \frac{n^k-1}{n-1}$. The number of joint signature states $|\bm{\mathcal{S}}_\sigma|$, which dictates the number of optimisation variables in the na\"ive LP of Eq.~\eqref{eq:lp}, is therefore double exponential, namely $\mathcal{O}(n^{n^{k-1}})$. In contrast, by Thm.~\ref{thm:reduced_lp_equivalence}, the size of the quotient space $|\bm{\mathcal{T}}_\sigma|$, which governs the size of the lifted LP in Eq.~\eqref{eq:lp_reduced}, is bounded by $\mathcal{O}\big(\prod_{i=1}^k |\mathcal{V}^{(i)}|\big) = \mathcal{O}(n^k)$. Thus, to partially identify any query given the empirical marginal $\hat{P}(\bm{V}_\gamma)$, we only need to solve Eq.~\eqref{eq:lp_reduced}, whose optimisation space scales linearly with the size of the sample space of $\bm{V}_\gamma$.

\section{Extensions}\label{sec:general}
In the current section, we present some important extensions of the LP mapping introduced in the previous section.

\paragraph{Conditional Queries.}
To bound a conditional query $P(\gamma \mid \delta) \coloneq P(\gamma \wedge \delta)/P(\delta)$, it suffices to note that both terms decompose linearly over $\bm{\mathcal{S}}_\sigma$ via Thm.~\ref{thm:ctf_linearity} and Cor.~\ref{cor:obs_consistency}. This reformulates the bounding task as a \emph{linear-fractional} programme, which is reduced to a standard LP via the Charnes-Cooper transformation \citep{charnesProgrammingLinearFractional1962}. We incorporate evidence by refining the equivalence relation in Eq.~\eqref{eq:signature_equivalence} to evaluate both $\gamma \land \delta$ and $\delta$:
\begin{equation}
    \bm{s}_\sigma \sim_{\gamma \mid \delta} \bm{s}'_\sigma \iff \bm{s}_\sigma \sim_{\gamma\land\delta} \bm{s}'_\sigma \land \left( (\bm{s}_\sigma \models \delta) \equiv (\bm{s}'_\sigma \models \delta) \right) \,.
\end{equation} 
Letting $b_{\psi} \coloneq \llbracket \bm{s}_\sigma \models \psi \rrbracket$, the logical constraint $\gamma \land \delta \Rightarrow \delta$ enforces $b_{\gamma \land \delta} \leq b_{\delta} \in \{0,1\}$. This restricts evaluations to three valid Boolean states: $(b_{\gamma\land\delta}, b_\delta) \in \{(0,0), (0,1), (1,1)\}$. The optimisation space retains the same asymptotic size discussed in Sect.~\ref{sec:algorithm}.

\paragraph{Nested Queries.}
The LP mapping in Eq.~\eqref{eq:lp}, and by extension the lifted LP in Eq.~\eqref{eq:lp_reduced}, accommodates nested counterfactuals. Although nested variables do not appear directly in the response signature $\bm{S}_\sigma$, their evaluation remains a linear optimisation task. For a single nested counterfactual, it is sufficient to consider the probabilistic version of Eq.~\eqref{eq:cut}, i.e.,
\begin{equation}\label{eq:unnest}
P_{\mathcal{M}}(Y_{X_z}=y)= \sum_{x\in \mathcal{X}} P_{\mathcal{M}}(Y_x=y,X_z=x)\,.
\end{equation}
In the general case, it is sufficient to perform such un-nesting on all the nested terms to obtain a linear function of standard counterfactual queries and thus a linear objective function.

\paragraph{Experimental Data.}
Beyond observational data, it is straightforward to incorporate experimental distributions obtained directly from randomised controlled trials. By Thm.~\ref{thm:ctf_linearity}, experimental probabilities translate directly into linear constraints on the signature probabilities in Eq.~\eqref{eq:lp}. To preserve the validity of the reduction in Eq.~\eqref{eq:lp_reduced}, the equivalence relation is refined by further requiring that equivalent signatures agree on the Boolean evaluation of the relevant interventional events. For instance, given an experimental probability $P(y_x)$, the relation in Eq.~\eqref{eq:signature_equivalence} is augmented with the logical condition $(\bm{s}_\sigma \models y_x) \equiv (\bm{s}'_\sigma \models y_x)$. Tracking these additional constraints partitions the signature space more finely, thereby scaling the quotient space by a factor determined by the number of interventional events considered. Furthermore, when multiple studies or heterogeneous data sources are available, one can seamlessly incorporate all evidence by appending the constraints induced by each trial in the same LP. Such a procedure natively performs automated data fusion, analogously to the approach developed by \citet{zaffalon2023} for PSCMs.

\paragraph{Monotonicity and Weak Exogeneity.}
For ordinal variables, \emph{monotonicity} \citep{Angrist1996IdentificationCausalEffectsIV, manskiMonotoneTreatmentResponse1997} posits a directional effect (e.g., $Y_{x_1} \geq Y_{x_0}$ if $x_1>x_0$). \emph{Weak exogeneity} \citep{tianProbabilitiesCausationBounds2000} requires that a variable is unaffected by unobserved confounders. Both assumptions translate expert knowledge into logical restrictions on the response signature $\bm{\mathcal{S}}_\sigma$: monotonicity excludes states violating the inequality by fixing relevant optimisation variables to zero; exogeneity requires that causal effects are identified from data, i.e., $P(y_{x})=P(y\mid x)$, functionally mirroring the embedding of experimental data.

\begin{figure}[!t]
  \centering
  \begin{subfigure}[b]{0.3\textwidth}
    \centering
    \begin{tikzpicture}[scale=1.0]
        \node[dot,label=below:{$X$}] (x)  at (1,0) {};
        \node[dot,label=below:{$Y$}] (y)  at (2,0) {};
        \node[dot2,label=right:{$U$}] (u)  at (1.5,0.8) {};
        \draw[a2] (x) -- (y);
        \draw[a2] (u) -- (x);
        \draw[a2] (u) -- (y);
    \end{tikzpicture}
    \caption{}
    \label{fig:2}
  \end{subfigure}%
  \hfill
  \begin{subfigure}[b]{0.3\textwidth}
    \centering
    \begin{tikzpicture}[scale=1.0]
        \node[dot,label=below:{$Z$}] (z)  at (0,0) {};
        \node[dot,label=below:{$X$}] (x)  at (1,0) {};
        \node[dot,label=below:{$Y$}] (y)  at (2,0) {};
        \node[dot2,label=right:{$U$}] (u)  at (1.5,0.8) {};
        \draw[a2] (z) -- (x);
        \draw[a2] (x) -- (y);
        \draw[a2] (u) -- (y);
        \draw[a2] (u) -- (x);
    \end{tikzpicture}
    \caption{}
    \label{fig:balke}
  \end{subfigure}%
  \hfill
  \begin{subfigure}[b]{0.3\textwidth}
    \centering
    \begin{tikzpicture}[scale=1.0]
        \node[dot,label=below:{$X$}] (x)  at (0,0) {};
        \node[dot,label=below:{$W$}] (w)  at (1,0) {};
        \node[dot,label=below:{$Y$}] (y)  at (2,0) {};
        \node[dot2,label=right:{$U$}] (u)  at (1.5,0.8) {};
        \draw[a2] (x) -- (w);
        \draw[a2] (w) -- (y);
        \draw[a2] (u) -- (y);
        \draw[a2] (u) -- (w);
        \draw[a2] (x) .. controls (1,0.5) .. (y);
    \end{tikzpicture}
    \caption{}
    \label{fig:mediator}
  \end{subfigure}
  \caption{{Causal diagrams considered in the examples.} (a)~Confounded treatment-outcome pair. (b)~Instrumental Variable (IV) setting, where $Z$ serves as an instrument for $X$. (c)~Mediation model with a direct effect and confounding between the mediator $W$ and the outcome. Grey nodes denote unobserved exogenous variables.}
  \label{fig:dags}
\end{figure}

\section{Examples}\label{sec:examples}
Let us validate the application of our LP-based bounding scheme across a range of scenarios. We compare our bounds against analytical formulae and results from the literature.\footnote{A Python implementation of our method is available at \url{www.github.com/Erhtric/copid}. The experimental setup employed the \texttt{COIN-OR CBC} open-source solver via the PuLP modeller (\url{www.coin-or.org}).}

\paragraph{Probabilities of Causation.}
As a first illustrative application of our method, we derive bounds for the well-known  
\emph{probability of necessity and sufficiency} (PNS, \citeauthor{pearlProbabilitiesCausationThree1999}, \citeyear{pearlProbabilitiesCausationThree1999}). This query refers to the counterfactual event $\gamma \coloneq (Y_{X=0}=0) \land (Y_{X=1}=1)$, which is intended to capture the causal relationship between two Boolean endogenous variables, $X$ and $Y$. The only total order $\sigma$ compatible with $\prec_\gamma$ is $\sigma:=(X,Y)$. The associated response signature is therefore $\bm{S}_\sigma = \{X, Y_{X=0}, Y_{X=1}\}$. Accordingly, PNS bounds are obtained by solving LPs with eight variables, corresponding to the joint states of $\bm{S}_\sigma$; and four linear constraints induced by the observational distribution $\hat{P}(X,Y)$. The resulting solutions coincide with the classical closed-form bounds derived by \citet{tianProbabilitiesCausationBounds2000}. The equivalence relation in Eq.~\eqref{eq:signature_equivalence} prunes the parameter space by identifying empty equivalence classes. Specifically, the candidate classes defined by the Boolean tuples $\langle (X=x, Y=y), \llbracket \gamma \rrbracket\rangle$ for $(x,y) \in \{(1,0), (0,1)\}$ contain no valid signatures. This is because an observational state of $(X=1, Y=0)$ forces $Y_{X=1}=0$ via consistency \citep{gallesAxiomaticCharacterizationCausal1998}, directly contradicting the requirement in $\gamma$ that $Y_{X=1}=1$. We similarly proceed for $(X=0, Y=1)$. 

The \emph{tightness} (referred to as \emph{sharpness} in the original work) of these bounds follows from Thm.~\ref{thm:exact_bounds}: one can construct two complete canonical SCMs under which the query attains the lower and upper bounds (see also the constructive argument in the proof).

Finally, by considering the extension to conditional queries discussed in Sect.~\ref{sec:general}, analogous results can be readily obtained for the other \emph{probabilities of causation} (PN and PS, \citeauthor{pearlProbabilitiesCausationThree1999}, \citeyear{pearlProbabilitiesCausationThree1999}). Moreover, unlike the analytical results of \citet{tianProbabilitiesCausationBounds2000}, which are restricted to binary variables, our approach naturally generalises to the non-binary case. By simply adjusting the cardinality of the variables in the response signature $\bm{S}_\sigma$, we can derive tight bounds for probabilities of causation involving multi-valued treatments and outcomes \citep{sun2025bounding} without requiring new algebraic derivations. From this perspective, our work is a genuine generalisation of the analysis in \citet{tianProbabilitiesCausationBounds2000}, with a clear semantics, and allowing to bound arbitrary counterfactual queries, whether nested or not.

\paragraph{Average Causal Effect (ACE).}
The ACE measures the average change due to treatment: 
\begin{equation}\label{eq:ace}
\mathrm{ACE}_{x,x'}(y) \coloneq P(Y_{x}=y) - P(Y_{x'}=y) \,.
\end{equation}
For an outcome $Y \in \{y,y'\}$ and treatment $|\mathcal{X}|=3$, the unique order $\sigma=(X,Y)$ induces an LP with 24 signature variables and six empirical constraints in Eq.\eqref{eq:lp}. The resulting bounds, $[ P(x,y) + P(x',y') - 1 , 1 - P(x,y') - P(x',y) ]$ \citep{balkeBoundsTreatmentEffects1997}, recover those of \citet[Sect.~6.1]{sachsGeneralMethodDeriving2023} whose causal assumptions are in Fig.~\ref{fig:2}. Crucially, our method achieves this without explicitly parametrising a fully specified causal graph nor the SEs. A different account of the ACE bounding can be found in App.~\ref{sec:appendix_B}.

\paragraph{Controlled Direct Effect (CDE).}
\begin{table}[htp!]
\footnotesize
\centering
\setlength{\tabcolsep}{4pt}
\begin{tabular}{@{}lrrrrrr@{}}
\toprule
\multicolumn{1}{c}{\multirow{2}{*}{Assumption Set}} & \multicolumn{2}{c}{$\mathrm{CDE}^{w_0}_{x_0,x_1}(y_1)$} & \multicolumn{2}{c}{$\mathrm{CDE}^{w_1}_{x_0,x_1}(y_1)$} & \multicolumn{1}{c}{\multirow{2}{*}{$|\bm{\mathcal{S}}_\sigma|$}} & \multicolumn{1}{c}{\multirow{2}{*}{$|\bm{\mathcal{T}}_\sigma|$}} \\ \cmidrule(lr){2-3} \cmidrule(lr){4-5}
& Lower & Upper & Lower & Upper & & \\ \midrule
\textit{No additional assumptions} & \multicolumn{6}{l}{}\\
\hspace{3mm}\citet{caiBoundsDirectEffects2008} (Fig.~\ref{fig:mediator}) & -0.200 & 0.385 & -0.781 & 0.634 & -- & -- \\
\hspace{3mm}This paper & -0.599 & 0.694 & -0.891 & 0.817 & 128 & 20 \\
\midrule
\textit{Monotonicity} & \multicolumn{6}{l}{}\\
\hspace{3mm}\citet{caiBoundsDirectEffects2008} & 0.000 & 0.036 & 0.000 & 0.582 & -- & -- \\
\hspace{3mm}This paper & 0.000 & 0.519 & 0.000 & 0.791 & 36 & 13 \\
\midrule
\textit{Weak exogeneity} & \multicolumn{6}{l}{}\\
\hspace{3mm}This paper & -0.200 & 0.385 & -0.782 & 0.633 & 128 & 64 \\
\midrule
\textit{Weak exogeneity + monotonicity} & \multicolumn{6}{l}{}\\
\hspace{3mm}This paper & 0.000 & 0.035 & 0.000 & 0.580 & 36 & 24 \\
\bottomrule
\end{tabular}
\caption{Comparison of CDE bounds on the (binary) Lipid data \citep[Tab.~1]{caiBoundsDirectEffects2008} against analytical bounds under varying assumptions. In this experiment, \emph{Monotonicity} forces positive effects of $X$ on $W$, $X$ on $Y$, and $W$ on $Y$; whilst, \emph{weak exogeneity} imposes no-interaction between $X$ and $W$ on $Y$.
}
\label{tab:cde_results}
\end{table}

Consider a setting in which the effect of a treatment $X$ on an outcome $Y$ is \emph{mediated} by a variable $W$. In this context, the ACE in Eq.~\eqref{eq:ace} can be refined into a CDE to measure the change due to treatment at different mediator \emph{strata}:
\begin{equation}\label{eq:cde}
\mathrm{CDE}^w_{x_0,x_1}(y)\coloneq P(Y_{x_1,w}=y)-P(Y_{x_0,w}=y)\,.
\end{equation}
The query induces a partial order, namely $\{X, W\} \prec Y$. By Thm.~\ref{thm:order_invariance}, we can choose an arbitrary order among its set of linear extensions, together with an empirical distribution $\hat{P}(X,W,Y)$, to solve Eq.~\eqref{eq:lp_reduced}. We validate the method against the problem proposed by \citet{caiBoundsDirectEffects2008} of bounding Eq.~\eqref{eq:cde} under the assumptions encoded in Fig.~\ref{fig:mediator}. In such a setting, we first consider all binary variables and a randomised treatment (i.e., no confounders between $X$ and $Y$), examining data from the \emph{Coronary Primary Prevention Trial} \citep{caiBoundsDirectEffects2008}. 

A summary of the computed bounds is reported in Tab.~\ref{tab:cde_results}. One can observe that when no assumptions are made, or when solely monotonicity is assumed, the minimal structure inherently implied by the query yields valid, but wide, intervals. Notably, the tight bounds reported by \citet{caiBoundsDirectEffects2008} rely on a fully specified causal graph. To mirror their structural assumptions within our framework, one must incorporate a weak exogeneity constraint---also referred to as the \emph{no-interaction} assumption---into the signature space $\bm{\mathcal{S}}_\sigma$. Specifically, enforcing the independence $\{Y,W\} \perp\!\!\!\perp X$ recovers \citet{caiBoundsDirectEffects2008} analytical bounds. As a special instance, one can also require $Y \perp\!\!\!\perp \{X, W\}$ to collapse the bound to a point estimate, since $P(y_{x,w})=P(y\mid x,w)$. As expected, introducing these independence constraints expands the number of optimisation parameters $|\bm{\mathcal{T}}_\sigma|$, whilst introducing monotonicity constraints shrinks the space (cf.~Tab.~\ref{tab:cde_results}). A similar discussion holds for categorical variables, where the LP reduction can be systematically constructed and compared against the equations presented in \citet{caiBoundsDirectEffects2008}.

\paragraph{Natural Direct Effect (NDE).}
\begin{table}[htp!]
\footnotesize
\centering
\begin{tabular}{lrrrr}
\toprule
 & Lower & Upper & $|\bm{\mathcal{S}}_\sigma|$ & $|\bm{\mathcal{T}}_\sigma|$ \\
\midrule
\multicolumn{5}{l}{\textit{Assumptions} for $\mathrm{NDE}_{x_0,x_1}(y_1)$} \\
\hspace{3mm}Query-induced Order & -0.691 & 0.688 & 294,912 & 60 \\
\hspace{3mm}$Y \perp\!\!\!\perp X$ & -0.503 & 0.497 & 294,912 & 96 \\
\hspace{3mm}$\{Y, W\} \perp\!\!\!\perp X$ & -0.503 & 0.497 & 294,912 & 528 \\
\hspace{3mm}$Y \perp\!\!\!\perp \{X, W\}$ & -0.628 & 0.679 & 294,912 & 89,172 \\
\hspace{3mm}$Y \perp\!\!\!\perp \{X, W\}$ \& $W \perp\!\!\!\perp X$ & -0.489 & 0.637 & 294,912 & 294,912 \\
\midrule
\midrule
\multicolumn{5}{l}{\textit{Query}} \\
\hspace{3mm}$P(Y_{x_1,W=w_0}=y_1,W_{x_0}=w_0)$ & 0.000 & 0.610 & 294,912 & 37 \\
\hspace{3mm}$P(Y_{x_1,W=w_1}=y_1,W_{x_0}=w_1)$ & 0.000 & 0.493 & 294,912 & 37 \\
\hspace{3mm}$P(Y_{x_1,W=w_2}=y_1,W_{x_0}=w_2)$ & 0.000 & 0.365 & 294,912 & 37 \\
\hspace{3mm}$P(Y_{x_1,W=w_3}=y_1,W_{x_0}=w_3)$ & 0.000 & 0.415 & 294,912 & 37 \\
\hspace{3mm}$P(Y_{x_1,W=w_4}=y_1,W_{x_0}=w_4)$ & 0.000 & 0.357 & 294,912 & 37 \\
\hspace{3mm}$P(Y_{x_1,W=w_5}=y_1,W_{x_0}=w_5)$ & 0.000 & 0.391 & 294,912 & 37 \\
\hspace{3mm}$P(Y_{x_0}=y_1)$                   & 0.312 & 0.691 & 8 & 6 \\
\bottomrule
\end{tabular}
\caption{UC Berkeley dataset: NDE bounds under various weak exogeneity assumptions (top) and individual unnested query terms under query-induced assumptions (bottom).}
\label{tab:berkeley}
\end{table}
The CDE in Eq.~\eqref{eq:cde} refers to a fixed state of the mediator. By contrast, the NDE evaluates the effect of a treatment when the mediator is set to the value it would have attained in the absence of treatment. This leads to the nested counterfactual query:
\begin{equation}\label{eq:nde}
\mathrm{NDE}_{x_0,x_1}(y_1) \coloneqq P(Y_{x_1, W_{x_0}}=y_1)-P(Y_{x_0}=y_1)\,.
\end{equation}
As a matter of fact, the nested syntactical structure of the query induces a unique complete order $X \prec W \prec Y$. We proceed to apply the un-nesting procedure in Eq.~\eqref{eq:unnest}, which, in turn, allows one to rewrite nested counterfactuals like in Eq.~\eqref{eq:nde} as sums of standard (non-nested) counterfactual queries. This transformation preserves the linear structure of the objective function in our LP formulation and one can consider the PSCM whose endogenous structure is like in Fig.~\ref{fig:mediator} but with one exogenous $U$ parent of all variables. As discussed previously, by Thm.~\ref{thm:reduced_lp_equivalence}, the optimisation space's size is bounded by $O(|\mathcal{X}| \cdot |\mathcal{W}| \cdot |\mathcal{Y}|)$ (cf.~Tab.~\ref{tab:berkeley}), making our approach computationally feasible for modern LP solvers. To the best of our knowledge, no general bounding techniques are currently available for queries of this kind without imposing additional structural assumptions about the underlying causal graph.

\paragraph{Berkeley Admissions Dataset.} For a numerical illustration, we consider the popular \emph{{UC} Berkeley admissions} dataset \citep{bickelSexBiasGraduate1975}. In this dataset, the binary variables $X$ and $Y$ represent \emph{gender} (with $x_1$ indicating \emph{female}) and \emph{admission outcome} (with $y_1$ denoting \emph{acceptance}). The mediator $W$ is a categorical variable with six states corresponding to the department to which the applicant applied. We omit monotonicity assumptions, as deterministic effects are tricky to justify in social contexts. 

Tab.~\ref{tab:berkeley} summarises the total NDE bounds under varying weak exogeneity assumptions, alongside the decomposed counterfactual terms from Eq.~\eqref{eq:unnest}. In the context of university admissions, enforcing $Y \perp\!\!\!\perp X$ assumes no confounding jointly affecting gender and admission. The stricter condition $\{Y, W\} \perp\!\!\!\perp X$ treats gender as if it were completely randomised, assuming department choice does not depend on gender. Conversely, $Y \perp\!\!\!\perp \{X, W\}$ dictates that the admission committee's decision is entirely independent of unobserved background factors influencing either the applicant's gender or department  preference. It is worth noting that when all possible exogeneity assumptions are employed, the equivalence reduction from Sect.~\ref{sec:background} fails to compress the optimisation space, as enforcing mutual independence across all response mechanisms requires tracking each signature state individually. Whilst these assumptions narrow the bounds, at the cost of expanding the optimisation space $|\bm{\mathcal{T}}_\sigma|$, they risk misrepresenting complex dynamics.

Unlike bespoke analytical methods that require graphical constraints to bound nested counterfactuals, our LP formulation automatically evaluates the NDE using only the minimal assumptions inherent to the query itself.
For reference, under the \emph{standard fairness model} \citep{pleckoCausalFairnessAnalysis2024}, the NDE evaluates to a point estimate of 0.043. Concluding a lack of discriminative policies based on this point estimate might be completely inaccurate, as the stark contrast with our wider, assumption-free intervals demonstrates how such conclusions rely heavily on rigid structural assumptions. Furthermore, it is interesting to notice that directly bounding the full NDE yields substantially tighter intervals than combining separately bounded decomposed components via interval arithmetic.

\section{Conclusions}\label{sec:conclusions}
We presented a linear programming framework for partially identifying arbitrary, including nested, counterfactual queries, relying exclusively on structural assumptions inherent to the query itself. In doing so, we successfully extended the classical bounding results of \citet{tianProbabilitiesCausationBounds2000} for probabilities of causation. We proved that partial identification under these minimal assumptions yields tight intervals, and established their invariance with respect to the  marginalisation of the variables not appearing in the query. Rather than requiring an input causal graph, an SCM attaining the bounds emerges constructively as a by-product. Furthermore, exploiting query symmetries drastically reduces the optimisation space, rendering the LP tractable. We acknowledge an inherent trade-off in the proposed method: reducing the reliance on prior domain knowledge, and explicit conditional independencies, produces intervals that are naturally wider than those derived from a fully specified causal graph. Nevertheless, as demonstrated in our case studies, this framework serves as a general and computationally efficient method to bound arbitrary inquiries under minimal assumptions. Future work could investigate extending this framework to cyclic causal structures, examining the impact of different valid causal orders, and investigate finer quotient-space reductions to further enhance scalability.

\newpage
\section*{Acknowledgements}
The research of Eric Rossetto was funded by Swiss Post AG as part of a Doctoral Studies Grant on \emph{AI and Robustness}. The authors thank the anonymous reviewers for their constructive comments, and in particular the first reviewer for insightful remarks on the underlying assumptions of our approach and its possible extension to cyclic models.


\appendix
\section{Proofs}\label{sec:appendix_A}
 \begin{proof}[\textbf{Proof of Thm.~\ref{thm:bounding_equivalence}}]
Let us separately prove the non-strict inequalities between the two sides of Eq.~\eqref{eq:theorem}. By definition, $M_\sigma$ is based on a causal graph compatible with $\sigma$. This implies $\mathbb{M}_{M_\sigma} \supseteq \mathbb{M}_\sigma$ and hence the $\leq$ inequality.

In order to prove the opposite inequality, let $\mathcal{M}^*$ be an SCM that solves Eq.~\eqref{eq:partial_identification_opt_fixed_order}. By definition of $\mathbb{M}_{\sigma}$, the causal graph of $\mathcal{M}^*$ is consistent with the total order $\sigma$. Therefore, for each $V^{(i)} \in \bm{V}$, the endogenous parents of $V^{(i)}$ in $\mathcal{M}^*$ form a subset of its predecessors $\bm{V}_{\prec i}$ under $\sigma$.
We can enlarge each SE of $V^{(i)}$ by including as inputs all variables in $\bm{V}_{\prec i}$ that are not already endogenous parents of $V^{(i)}$, and by similarly incorporating all exogenous variables not already among its exogenous parents. In this way, the collection $\bm{U}$ of exogenous variables can be treated as a single aggregated variable $U$. Consequently, $\mathcal{M}^*$ may be regarded as an SCM that is consistent with the same causal graph as the PSCM $M_{\sigma}$. Since $M_{\sigma}$ is canonical, there exists an equivalent SCM in $\mathbb{M}_{M_{\sigma}}$. This implies the $\geq$ inequality.
\end{proof}

\begin{lemma}
\label{lemma:signature_entailment}
For any atomic counterfactual proposition $V_{\bm{x}}=v$ and $\bm{s}_\sigma \in \bm{\mathcal{S}}_\sigma$, the truth value of the entailment $\bm{s}_\sigma \models (V_{\bm{x}}=v)$ is determined. Furthermore, if $\gamma$ is a conjunction of such propositions, the truth value of $\bm{s}_\sigma \models \gamma$ is also determined.
\end{lemma}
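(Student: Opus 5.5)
The plan is to show that a joint signature state $\bm{s}_\sigma$ plays the role of a single exogenous state of the complete canonical PSCM $M_\sigma$. It fixes, for every $V^{(i)}$ and every configuration $\bm{v}_{\prec i}$ of its predecessors, the value $V^{(i)}_{\bm{v}_{\prec i}}$. This is a deterministic mapping $g_i:\bm{\mathcal{V}}_{\prec i}\to\mathcal{V}^{(i)}$, so $\bm{s}_\sigma$ is equivalent to a fully specified deterministic recursive system $\{g_i\}_{i=1}^k$ consistent with $\sigma$. The truth value of $V_{\bm{x}}=v$ will then be obtained by computing the potential response $V_{\bm{x}}$ in the sub-model of this deterministic system and comparing it with $v$.

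Concretely, fix $\bm{s}_\sigma$ and an intervention $\bm{X}\gets\bm{x}$. I would define values $w^{(1)},\dots,w^{(k)}$ by induction along $\sigma$. If $V^{(i)}\in\bm{X}$, set $w^{(i)}$ to the corresponding component of $\bm{x}$. Otherwise set $w^{(i)}\coloneqq g_i(w^{(1)},\dots,w^{(i-1)})$, i.e.\ read off the component of $\bm{s}_\sigma$ indexed by $V^{(i)}_{\bm{w}_{\prec i}}$. The predecessors' values are already fixed by the inductive hypothesis, and that component exists in $\bm{S}_\sigma$ because Eq.~\eqref{eq:signature} ranges over all $\bm{v}_{\prec i}\in\bm{\mathcal{V}}_{\prec i}$. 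By induction on $i$, each $w^{(i)}$ is uniquely determined. Hence $\bm{s}_\sigma\models(V_{\bm{x}}=v)$ is defined as $\llbracket w^{(j)}=v\rrbracket$ for $V=V^{(j)}$, a well-defined truth value.

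The step needing most care is the consistency of this definition with the SCM semantics of Eq.~\eqref{eq:query_eval}. Choose any $\bm{u}$ in $M_\sigma$, or in any $\mathcal{M}\in\mathbb{M}_\sigma$ enlarged as in the proof of Thm.~\ref{thm:bounding_equivalence}, whose induced signature values equal $\bm{s}_\sigma$. The recursive evaluation of $\mathcal{M}_{\bm{x}}$ at $\bm{u}$ then coincides step by step with the induction above, because each non-intervened $V^{(i)}$ is computed by $f_{V^{(i)}}(\cdot,\bm{u})$ restricted to its predecessor values, which is exactly $g_i$. So $Y_{\bm{x}}(\bm{u})$ depends on $\bm{u}$ only through $\bm{s}_\sigma$. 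This gives determinacy in the semantic sense too: all $\bm{u}$ mapping to the same $\bm{s}_\sigma$ agree on the proposition.

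The edge cases are straightforward. If $V\in\bm{X}$, then $w$ equals the intervened value, so the proposition is trivially decided. Interventions on variables that are not ancestors of $V$ agree with Prop.~\ref{prop:topological_invariance}, since the induction never uses them. Finally, for a conjunction $\gamma$, each conjunct is decided separately, possibly under different sub-models $\bm{x}^{(i)}$, all using the same $\bm{s}_\sigma$. Therefore $\llbracket\bm{s}_\sigma\models\gamma\rrbracket=\prod_i\llbracket\bm{s}_\sigma\models y^{(i)}_{\bm{x}^{(i)}}\rrbracket$ is determined.
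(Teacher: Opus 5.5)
Your proposal is correct and follows essentially the same route as the paper: an induction along $\sigma$ in which intervened variables take their assigned values, each non-intervened $V^{(i)}$ is read off from the component $V^{(i)}_{\bm{v}_{\prec i}}$ of $\bm{s}_\sigma$ at the already-determined predecessor values, and the conjunction is decided atom by atom. Your extra paragraph, checking agreement with the SCM semantics of Eq.~\eqref{eq:query_eval}, goes slightly beyond the paper's proof of this lemma; the paper only makes that link later, through the bijection of Lem.~\ref{lemma:exogenous_signature_bijection} in the proof of Thm.~\ref{thm:ctf_linearity}.
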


\begin{proof}
Consider an arbitrary intervention $\bm{x}$. We proceed by induction on the topological order $\sigma$ to show that the potential response of every variable $V^{(i)}$ is uniquely determined by $\bm{s}_\sigma$ and $\bm{x}$.

If $i=1$, the variable $V^{(1)}$ has no endogenous parents. If $V^{(1)} \in \bm{X}$, its value is fixed to the assignment specified in $\bm{x}$. If $V^{(1)} \notin \bm{X}$, its value is uniquely determined by the realization encoded in $\bm{s}_\sigma$.

Assume now that for some $i > 1$, the interventional values of all preceding variables $\bm{V}_{\prec i}$ are fixed to $(\bm{v}_{\prec i})_{\bm{x}}$. We determine the value of $V^{(i)}$ as follows:
\begin{enumerate}[noitemsep]
    \item If $V^{(i)} \in \bm{X}$, its value is determined by the assignment in $\bm{x}$.
    \item If $V^{(i)} \notin \bm{X}$, its value is uniquely determined by $\bm{s}_\sigma$ given the values of its parents, which are fixed to $(\bm{v}_{\prec i})_{\bm{x}}$ by the induction hypothesis.
\end{enumerate}

Since $|\bm{V}| < \infty$, every counterfactual variable $V_{\bm{x}}$ evaluates to a unique constant under $\bm{s}_\sigma$. Consequently, the atomic entailment is well-defined. By extension, the entailment for a conjunction $\gamma = \bigwedge_{j=1}^k (Y^{(j)}_{\bm{x}^{(j)}} = y^{(j)})$ is determined by:
\begin{equation}
    (\bm{s}_\sigma \models \gamma) \iff \bigwedge_{j=1}^k \left(\bm{s}_\sigma \models Y^{(j)}_{\bm{x}^{(j)}} = y^{(j)} \right ) \, .
\end{equation}
\end{proof}

\begin{lemma}\label{lemma:exogenous_signature_bijection}
For any arbitrary SCM $\mathcal{M} \in \mathbb{M}_{M_\sigma}$, there exists a bijection $h: \mathcal{U} \to \bm{\mathcal{S}}_\sigma$ between its exogenous domain $\mathcal{U}$ and the response signature domain $\bm{\mathcal{S}}_\sigma$.
\end{lemma}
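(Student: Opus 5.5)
The plan is to build $h$ directly from the structural equations of $\mathcal{M}$, and to use canonicity of $M_\sigma$ to show that $h$ is bijective. Fix $\mathcal{M}\in\mathbb{M}_{M_\sigma}$, with its single exogenous variable $U$ and structural equations $v^{(i)}=f_{V^{(i)}}(\bm{v}_{\prec i},u)$. For each $u\in\mathcal{U}$, set
\[
h(u)\coloneqq\left(f_{V^{(i)}}(\bm{v}_{\prec i},u)\right)_{\bm{v}_{\prec i}\in\bm{\mathcal{V}}_{\prec i}}^{i=1,\ldots,k}.
\]
This is the tuple of values that each potential outcome $V^{(i)}_{\bm{v}_{\prec i}}$ in the signature of Eq.~\eqref{eq:signature} takes at $u$. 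Since $\mathrm{Pa}_{V^{(i)}}=\bm{V}_{\prec i}$ in $M_\sigma$, intervening on all of $\bm{V}_{\prec i}$ leaves $V^{(i)}$ governed only by its own equation. Hence $V^{(i)}_{\bm{v}_{\prec i}}(u)=f_{V^{(i)}}(\bm{v}_{\prec i},u)$, and $h(u)$ is exactly the joint signature state induced by $u$. So $h$ is a well-defined map $\mathcal{U}\to\bm{\mathcal{S}}_\sigma$.

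Next, observe that a signature state $\bm{s}_\sigma$ is the same thing as a tuple of functions $(g_1,\ldots,g_k)$ with $g_i:\bm{\mathcal{V}}_{\prec i}\to\mathcal{V}^{(i)}$. The component of $\bm{s}_\sigma$ indexed by $(i,\bm{v}_{\prec i})$ is just $g_i(\bm{v}_{\prec i})$. Thus $|\bm{\mathcal{S}}_\sigma|=\prod_i |\mathcal{V}^{(i)}|^{|\bm{\mathcal{V}}_{\prec i}|}$, which is the number of joint mechanism profiles from endogenous parents to children.

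Canonicity in the sense of \citet{zhangPartialCounterfactualIdentification2021}, as used in the definition of $M_\sigma$, says that the states of $U$ index all such profiles, each exactly once. That is, the map $u\mapsto (f_{V^{(i)}}(\cdot,u))_i$ is onto the set of mechanism tuples, and $|\mathcal{U}|$ equals the number of profiles. Surjectivity of $h$ then follows immediately, because every $\bm{s}_\sigma$ is some profile $(g_i)_i$ and so equals $h(u)$ for the $u$ indexing it. Injectivity follows from the cardinality match: a surjection between finite sets of equal size is a bijection.

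The main obstacle is the precise content of ``canonical''. If it only guarantees that every profile is realised, possibly by several $u$, then $h$ need not be injective. In that case I would first merge exogenous states that induce identical profiles: they are indistinguishable for every structural equation, so merging them changes no potential response. This yields an equivalent SCM in $\mathbb{M}_{M_\sigma}$ with the same counterfactual distribution, on which $h$ is a bijection. I would rely on the exact-indexing reading of canonicity stated in App.~\ref{sec:appendix_C}, and invoke this merging argument only as a fallback.
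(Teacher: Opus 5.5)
Your proof is correct and is essentially the paper's argument: $h$ is the induced-signature map, surjectivity comes from canonicity (every mechanism profile is indexed by some $u$), and injectivity follows from the cardinality match $|\mathcal{U}|=|\bm{\mathcal{S}}_\sigma|$ between finite sets. Your check that $V^{(i)}_{\bm{v}_{\prec i}}(u)=f_{V^{(i)}}(\bm{v}_{\prec i},u)$ and your count of $|\bm{\mathcal{S}}_\sigma|$ make explicit what the paper leaves implicit, and the merging fallback is not needed under the exact-indexing reading of canonicity, which the paper also relies on.
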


\begin{proof}
Let $h: \mathcal{U} \to \bm{\mathcal{S}}_\sigma$ map each exogenous state $u$ to its induced signature $\bm{s}_\sigma$. Because SEs are deterministic, each $u$ induces exactly one signature, making $h$ a well-defined function. We establish that $h$ is a \emph{bijection}. 

By definition, $\bm{\mathcal{S}}_\sigma$ comprises all valid configurations of potential outcomes which $M_\sigma$ generates. Because $\mathcal{M} \in \mathbb{M}_{M_\sigma}$, it follows a canonical specification \citep[Def.~2.3]{zhangPartialCounterfactualIdentification2021}; its exogenous domain $\mathcal{U}$ explicitly indexes the Cartesian product of every possible deterministic function mapping from endogenous parents to children. Thus, for any arbitrary signature $\bm{s}_\sigma \in \bm{\mathcal{S}}_\sigma$, there is at least one state $u \in \mathcal{U}$ that generates it. We conclude that $h$ is surjective.

By the same canonical specification of $\mathcal{M}$, the cardinality of $\mathcal{U}$ is exactly equal to the cardinality of the Cartesian product of all possible functional mappings---which coincides with $|\bm{\mathcal{S}}_\sigma|$ by construction in Eq.~\eqref{eq:signature}. Because $h$ is a surjective function mapping between two finite sets of identical cardinality, it must necessarily be injective. Therefore, every state $u$ corresponds uniquely to a signature $\bm{s}_\sigma$.
\end{proof}

\begin{proof}[\textbf{Proof of Thm.~\ref{thm:ctf_linearity}}]
By Thm.~\ref{thm:bounding_equivalence}, without loss of generality, we can restrict our analysis to an arbitrary SCM $\mathcal{M} \in \mathbb{M}_{M_\sigma}$. 

By Lem.~\ref{lemma:exogenous_signature_bijection}, there exists a bijection $h$ mapping each exogenous state $u \in \mathcal{U}$ to a unique response signature $\bm{s}_\sigma \in \bm{\mathcal{S}}_\sigma$. Consequently, the logical equivalence $(u \models \gamma) \iff (\bm{s}_\sigma \models \gamma)$ holds true for $\bm{s}_\sigma=h(u)$. The left-hand side is well-defined by the soundness axiomatisation of SCMs \citep{halpernAxiomatizingCausalReasoning2000}, and the right-hand side is determined by Lem.~\ref{lemma:signature_entailment}. 

Because $h$ is bijective, the probability measure over $\mathcal{U}$ directly translates to $\bm{\mathcal{S}}_\sigma$, yielding $P(U = u) = P(\bm{S}_\sigma = \bm{s}_\sigma)$. Substituting $u$ with $\bm{s}_\sigma$ in Eq.~\eqref{eq:query_eval}, we obtain:
\begin{equation*}
    P_{\mathcal{M}}(\gamma) = \sum_{\bm{s}_\sigma \in \bm{\mathcal{S}}_\sigma} \llbracket \bm{s}_\sigma \models \gamma \rrbracket \cdot P(\bm{S}_\sigma = \bm{s}_\sigma) \,.
\end{equation*}
\end{proof}

\begin{proof}[\textbf{Proof of Thm.~\ref{thm:exact_bounds}}]
Let $L^*$ be the minimum of the optimisation problem in Eq.~\eqref{eq:partial_identification_opt_fixed_order}, and let $L$ be the minimum of the LP task defined in Eq.~\eqref{eq:lp}. To prove the theorem, we first show that $L \leq L^*$, and then $L \geq L^*$.

Assume the optimal value $L^*$ is achieved by some SCM $\mathcal{M}^* \in \mathbb{M}_{\sigma}$ for query $P_{\mathcal{M}^*}(\gamma)$. By Thm.~\ref{thm:bounding_equivalence}, we can consider $\mathcal{M}^*$ to be an SCM parametrising the canonical PSCM $M_\sigma$. By construction of $\bm{\mathcal{S}}_\sigma$, the exogenous distribution of $\mathcal{M}^*$ maps to a valid joint probability distribution $\mathbf{q}^*$ over the states $\bm{s}_\sigma \in \bm{\mathcal{S}}_\sigma$. By Thm.~\ref{thm:ctf_linearity}, the objective query can be expressed as a linear combination of the probabilities associated with such states. Furthermore, since $\mathcal{M}^*$ is a feasible solution to Eq.~\eqref{eq:partial_identification_opt_fixed_order}, it must satisfy the observational constraint $P_{\mathcal{M}^*}(\bm{V}) = \hat{P}(\bm{V})$. Correspondingly, $\mathbf{q}^*$ satisfies the LP's constraint imposed by observational data as by Cor.~\ref{cor:obs_consistency}. Consequently, $\mathbf{q}^*$ constitutes a feasible solution within the constraints of the LP formulation in Eq.~\eqref{eq:lp}. Because the LP identifies the global minimum $L$ over all feasible distributions, it must be that $L \leq L^*$. 

Conversely, let $\mathbf{q}^*$ be the optimal solution that minimises the LP task in Eq.~\eqref{eq:lp}, such that the objective evaluates to $L$. Consider a specific SCM $\mathcal{M} \in \mathbb{M}_{\sigma}$; our argument will equivalently consider $\mathcal{M} \in \mathbb{M}_{M_{\sigma}}$ (after Thm.~\ref{thm:bounding_equivalence}). We define its exogenous distribution by assigning $P_{\mathcal{M}}(u) = q^*_{h(u)}$, where $h$ is the bijection established in Lem.~\ref{lemma:exogenous_signature_bijection}. Additionally, because the LP enforces the observational constraints, $\mathcal{M}$ matches the empirical distribution, then it is also a feasible candidate for the optimisation task in Eq.~\eqref{eq:partial_identification_opt_fixed_order}. Since $L^*$ represents the minimum over \emph{all} feasible SCMs in $\mathbb{M}_{\sigma}$, and we have found a specific feasible SCM $\mathcal{M}$ that achieves $L$, it must hold that $L^* \leq L$.

Since $L \leq L^*$ and $L^* \leq L$, we conclude that $L = L^*$. Furthermore, the explicit construction in the second half of the proof demonstrates that there exists an SCM $\mathcal{M} \in \mathbb{M}_{M_\sigma}$ that precisely attains the LP bounds.
\end{proof}

\begin{proof}[\textbf{Proof of Thm.~\ref{thm:reduced_lp_equivalence}}]
Let $Q_{LP}$ and $Q_{LLP}$ denote the feasible regions of the LP in Eq.~\eqref{eq:lp} and of the lifted LP in Eq.~\eqref{eq:lp_reduced}, respectively. 

For any feasible solution $\bm{q}^* \in Q_{LP}$, by definition of the equivalence relation in Eq.~\eqref{eq:signature_equivalence}, $q_{\bm{t}_\sigma} = \sum_{\bm{s}_\sigma \in \bm{t}_\sigma} q^*_{\bm{s}_\sigma}$ trivially satisfies the constraints of Eq.~\eqref{eq:lp_reduced} whilst preserving the objective value, as the Boolean evaluations are constant within a class $\bm{t}_\sigma \in \bm{\mathcal{T}}_\sigma$. 

Conversely, consider any feasible solution $\bm{q}^*_{\bm{t}_\sigma} \in Q_{LLP}$. Let $k_{\bm{t}_\sigma} \coloneq |\bm{t}_\sigma|$ be the number of signatures $\bm{s}_\sigma$ contained within a given equivalence class $\bm{t}_\sigma$. Note that by construction every class $\bm{t}_\sigma$ is non-empty, hence $k_{\bm{t}_\sigma} > 0$. We construct a valid distribution in the original space by assigning a uniform mass $q_{\bm{s}_\sigma} = q^*_{\bm{t}_\sigma} / k_{\bm{t}_\sigma}$ to every signature $\bm{s}_\sigma \in \bm{t}_\sigma$, for all $\bm{t}_\sigma \in \bm{\mathcal{T}}_\sigma$. Because the logical constraints evaluate identically across $\bm{t}_\sigma$, summing over this uniformly distributed mass recovers the constraints and the objective value of the lifted LP, satisfying Eq.~\eqref{eq:lp}.

Since these mappings strictly preserve both feasibility and the objective value in both directions, the extrema evaluated over $Q_{LLP}$ must exactly match those over $Q_{LP}$. Following the constructive argument detailed in the proof of Thm.~\ref{thm:exact_bounds}, there exists a witness SCM that attains these bounds, ensuring the tightness is inherited by the lifted LP in Eq.~\eqref{eq:lp_reduced}.
\end{proof}

\begin{proof}[\textbf{Proof of Thm.~\ref{thm:opt_equivalence_marginal}}]
Let $\mathcal{M}^* \in \mathbb{M}_\sigma$ and $\mathcal{\tilde{M}}^*\in \mathbb{M}_\sigma^{\bm{V}_\gamma}$ denote the SCMs giving the optima in, respectively, Eq.~\eqref{eq:partial_identification_opt_fixed_order} and Eq.~\eqref{eq:partial_identification_opt_fixed_order_marginal}.
Let also $L\coloneq P_{\mathcal{M}^*}(\gamma)$ and $L' \coloneq P_{\mathcal{\tilde{M}}^*}(\gamma)$. Following, for instance, \citet{bongers2021foundations}, we can marginalise out from $\mathcal{M}^*$ the variables in $\bm{V} \setminus \bm{V}_\gamma$ and obtain a well-defined SCM over $\bm{V}_\gamma$. Moreover, by construction, we have $P_\mathcal{M}^*(\bm{V})=\hat{P}(\bm{V})$, and this implies, in the marginal SCM, $P_\mathcal{M}^*(\bm{V}_\gamma)=\hat{P}(\bm{V}_\gamma)$. Overall, we have an SCM in 
$\mathbb{M}_\sigma^{\bm{V}_\gamma}$ giving $L$, and this proves $L \geq L'$. 

To prove the inverse inequality, let $P(\bm{V}'\mid\bm{V}_{\gamma}):= \hat{P}(\bm{V})/\hat{P}(\bm{V}_\gamma)$, where $\bm{V}':=\bm{V} \setminus \bm{V}_\gamma$. We augment the SCM $\mathcal{\tilde{M}}^*$ with the variables in $\bm{V}'$. By setting all the variables in $\bm{V}_\gamma$ as endogenous parents of those in $\bm{V}'$, we can easily obtain $P(\bm{V})=\hat{P}(\bm{V})$. Thus we have an SCM in $\mathbb{M}_\sigma$ such that $P(\bm{V})=\hat{P}(\bm{V})$ and giving $L'$. This proves $L \leq L'$, and hence the thesis.
\end{proof}

\begin{proof}[\textbf{Proof of Thm.~\ref{thm:order_invariance}}]
Let $\sigma \in \Sigma(\prec_\gamma)$ be an arbitrary total order compatible with $\prec_\gamma$, and consider the lifted LP in Eq.~\eqref{eq:lp_reduced} constructed over $\sigma$. Each equivalence class $\bm{t}_\sigma \in \bm{\mathcal{T}}_\sigma$ is uniquely characterised by an observational assignment $\bm{v} \in \bm{\mathcal{V}}$ and a Boolean query evaluation $b = \llbracket \bm{t}_\sigma \models \gamma \rrbracket \in \{0, 1\}$.

An equivalence class indexed by $\langle \bm{v}, b \rangle$ is non-empty (i.e., contains at least one signature $\bm{s} \in \bm{\mathcal{S}}_\sigma$) if and only if $\bm{v}$ and $b$ do not contradict the consistency axiom \citep{gallesAxiomaticCharacterizationCausal1998}. Specifically, for any atomic proposition $Y_x = y^*$ in $\gamma$, if $\bm{v}$ assigns $X = x$, consistency forces $Y_x = v_Y$. Hence, if $v_Y \neq y^*$, consistency prevents $\gamma$ from being satisfied, ruling out $b = 1$; conversely, if $\bm{v}$ forces every atomic term in $\gamma$ to hold, consistency rules out $b = 0$.

Crucially, whether a pair $\langle \bm{v}, b \rangle$ satisfies the consistency axiom depends solely on the assignment $\bm{v}$ and the syntax of $\gamma$, independently of the topological order $\sigma$. Moreover, because the canonical specification contains all possible functional responses, every pair $\langle \bm{v}, b \rangle$ that is compatible with consistency is realised by at least one signature $\bm{s} \in \bm{\mathcal{S}}_\sigma$: the observational response is fixed to $\bm{v}$, while the counterfactual responses under unobserved interventions can be set to satisfy $\gamma$ (yielding $b=1$) or violate at least one atom of $\gamma$ (yielding $b=0$).

We can conclude that the quotient space $\bm{\mathcal{T}}_\sigma$ of non-empty equivalence classes is invariant to $\sigma$. Indexing the lifted optimisation variables as $q_{\langle \bm{v}, b \rangle}$, the LP objective function is $\sum_{\bm{v}} q_{\langle \bm{v}, 1 \rangle}$ and the empirical constraints are $q_{\langle \bm{v}, 0 \rangle} + q_{\langle \bm{v}, 1 \rangle} = \hat{P}(\bm{v})$ for each $\bm{v} \in \bm{\mathcal{V}}$. Because the optimisation variables, constraints, and objective function are entirely decoupled from $\sigma$, the resulting LP is identical for all linear extensions $\sigma \in \Sigma(\prec_\gamma)$. Solving the LP for any arbitrary compatible $\sigma$ therefore yields the same bounds as Eq.~\eqref{eq:partial_identification_opt_fixed_order_marginal} and solves the global optimisation task in Eq.~\eqref{eq:partial_identification_over_orders}.
\end{proof}
\section{Additional Experiments}\label{sec:appendix_B}

\paragraph{ACE with Imperfect Compliance.}
\begin{table}[htp!]
\centering
\begin{tabular}{lrr}
\hline
Method & \multicolumn{1}{c}{Lower} & \multicolumn{1}{c}{Upper} \\
\hline
\multicolumn{3}{l}{\textit{\textbf{Vitamin A}}} \\
\hspace{3mm}\citet{balkeBoundsTreatmentEffects1997}& -0.195 & 0.005 \\
\hspace{3mm}This paper& -0.587 & 0.412 \\

\hline
\multicolumn{3}{l}{\textit{\textbf{Coronary}}} \\
\hspace{3mm}\citet{balkeBoundsTreatmentEffects1997}& 0.262  & 0.868 \\
\hspace{3mm}This paper& -0.685 & 0.971 \\
\hline
\end{tabular}
\caption{{Comparison of ACE bounds.} 
Lower and upper bounds for the ACE grouped by Study.}
\label{tab:ace}
\end{table}
We consider a model with three Boolean variables. In addition to the outcome $Y$, we distinguish between treatment exposure $X$ and treatment assignment $Z$. To evaluate the causal effect of the treatment when assigned, we focus on the query $\mathrm{ACE}_{X=0,X=1}(Y=1)$, defined as in Eq.~\eqref{eq:ace}. Observational information is available only in the form of a \emph{conditional} empirical distribution $\hat{P}(X,Y|Z)$. In our framework, consistency with the empirical distribution is expressed as:
\begin{equation}
\frac{\sum_{\bm{s}_{\sigma} \in \bm{\mathcal{S}}_\sigma} \llbracket \bm{s}_{\sigma} \models (x,y,z) \rrbracket \cdot q_{\bm{s}_{\sigma}}}{\sum_{\bm{s}_{\sigma} \in \bm{\mathcal{S}}_\sigma} \llbracket \bm{s}_{\sigma} \models z \rrbracket \cdot q_{\bm{s}_{\sigma}}}
= \hat{P}(x,y|z) \,,
\end{equation}
which yields linear constraints over the optimisation variables $q_{\bm{s}_\sigma}$, thus preserving the LP structure of our method. Although the query involves only $X$ and $Y$, the conditional nature of the empirical distribution prevents a marginalisation of $Z$. Consequently, the response signature must be based on all three Boolean variables. 
By Thm.~\ref{thm:order_invariance} we choose an arbitrary admissible order $\sigma$ such that $X \prec Y$.

We compare our bounds with those obtained from the LP formulation introduced by \citet{balkeBoundsTreatmentEffects1997} and later generalised by \citet{sachsGeneralMethodDeriving2023}. That approach requires an explicit causal graph. In particular, the model in Fig.~\ref{fig:balke} assumes that $Z$ affects $Y$ only through $X$, and that $Z$ is independent of an unobserved confounder $U$ affecting both $X$ and $Y$. Tab.~\ref{tab:ace} reports the resulting comparison for two observational studies: the \emph{vitamin A} supplementation study in northern Sumatra and the \emph{coronary} primary prevention trial. Our bounds remain informative, but are substantially wider than those of \citet{balkeBoundsTreatmentEffects1997}. This difference reflects the fact that our framework relies only on the order restrictions induced by the query, whereas the graphical model in Fig.~\ref{fig:balke} imposes additional structure, most notably the exclusion of a direct effect from $Z$ to $Y$. To model these effects with such a granularity one would either impose additional (non-linear) constraints or eventually require a causal graph, but this lies outside manuscript's scope. By Thm.~\ref{thm:exact_bounds}, the more extreme values admitted by our procedure are attainable under SCMs compatible with our weaker assumptions, including models in which $Z$ may directly affect $Y$.

As pointed out by \citet{balkeCounterfactualProbabilitiesComputational1994}, bounds can be further tightened if additional assumptions are made about subject behaviours: imposing a monotonic constraint, e.g. a positive response on treatment, $Y_{x_1,z} \geq Y_{x_0,z}$ for every $z \in \mathcal{Z}$. As a consequence, negative lower bounds on the effects are raised to zero. Nevertheless, even when combined with additional assumptions such as weak exogeneity, these restrictions are still insufficient to recover the analytical bounds implied by the graphical model, as discussed in Sect.~\ref{sec:examples}.

\paragraph{PNS Generalisation for Non-Binary Variables}
\begin{figure}[htbp]
    \centering
    \begin{tikzpicture}
        \begin{axis}[
            title={},
            xlabel={Cardinality $n$},
            ylabel={Computation Time (seconds)},
            ymode=log,
            log basis y=10,
            xmin=3.5, xmax=15.5,
            grid=major,
            legend pos=north west,
            width=0.8\textwidth,
            height=8cm,
            cycle list name=color list
        ]
        
        \addplot+[mark=*, color=gray] coordinates {
            (4, 0.0579) (5, 0.0286) (6, 0.0366) (7, 0.0378)
            (8, 0.0523) (9, 0.0771) (10, 0.0950) (11, 0.1416)
            (12, 0.2084) (13, 0.3179) (14, 0.4744) (15, 0.6226)
        };
        \addlegendentry{GPNS(2)}
        
        \addplot+[mark=square*, color=gray] coordinates {
            (4, 0.0275) (5, 0.0312) (6, 0.0515) (7, 0.0831)
            (8, 0.1737) (9, 0.3366) (10, 0.6532) (11, 1.2209)
            (12, 2.0519) (13, 3.6114) (14, 5.8122) (15, 8.4889)
        };
        \addlegendentry{GPNS(3)}
        
        \addplot+[mark=triangle*, color=gray] coordinates {
            (4, 0.0292) (5, 0.0418) (6, 0.1450) (7, 0.3541)
            (8, 1.0529) (9, 2.5386) (10, 5.6041) (11, 11.8192)
            (12, 24.0566) (13, 45.8219) (14, 77.9922) (15, 128.9642)
        };
        \addlegendentry{GPNS(4)}
        \end{axis}
    \end{tikzpicture}
    \caption{End-to-end computation time for $\mathrm{GPNS}(r)$ bounds, on a semi-logarithmic scale, as a function of the variable cardinalities
    ($n = |\mathcal{X}| = |\mathcal{Y}|$).}
    \label{fig:pns_timings}
\end{figure}

To assess the practical scalability of the lifted LP in Eq.~\eqref{eq:lp_reduced}, we measured the end-to-end time required to
construct the signature equivalence classes, aggregate the lifted
variables, and solve the resulting LP. The benchmark query is a
multi-hypothetical generalisation of PNS for ordinal treatment and
outcome variables, in the spirit of \citet{kawakami24aPoCContinuosVector}. Let $X$ and $Y$ be ordinal treatment and effect variables, respectively, defined over finite domains $\mathcal{X} = \{x^{(0)}, \dots, x^{(n-1)}\}$ and $\mathcal{Y} = \{y^{(0)}, \dots, y^{(n-1)}\}$ where $n$ is an experimental parameter. The GPNS of order $r$ (where $2 \leq r \leq n$) evaluates the joint probability of strict compliance across $r$ distinct interventions, defined as:
\begin{equation}
\label{eq:gpns}
    \mathrm{GPNS}(r) \coloneq P\left(\bigwedge_{i=0}^{r-1} Y_{x^{(i)}} = y^{(i)}\right) \,.
\end{equation}
For binary $X$ and $Y$, $\mathrm{GPNS}(2)$ reduces to the
standard PNS event. For each $n\in\{4,\ldots,15\}$, we randomly parametrise a Bayesian network with the endogenous structure in Fig.~\ref{fig:2}, estimated $\hat P(X,Y)$ from simulated observational data, and computed bounds for $\mathrm{GPNS}(r)$ with $r\in\{2,3,4\}$. Figure~\ref{fig:pns_timings} reports the resulting end-to-end computation times.

For this benchmark, $\bm V_\gamma=\{X,Y\}$, so the lifted LP has at most
$|\mathcal X||\mathcal Y|=n^2$ variables, as derived in Sect.~\ref{sec:algorithm}. Results are consistent with the polynomial scaling of the lifted formulation, although we must consider the overhead introduced by the symmetry detection, LP construction, and numerical optimisation. As expected, solving the LP remains tractable for standard solvers as the domain expands.

\section{Canonical Specifications}\label{sec:appendix_C}
\begin{figure}[!t]
  \centering
  \begin{subfigure}[b]{0.3\textwidth}
    \centering
    \begin{tikzpicture}[scale=1.0]
        \node[dot,label=below:$X$] (x) at (0,0) {};
        \node[dot,label=below:$W$] (w) at (1.25,0) {};
        \node[dot,label=below:$Y$] (y) at (2.5,0) {};
        \node[dot2,label=above:$U_X$] (ux) at (0,1) {};
        \node[dot2,label=above:$U_{W}$] (uw) at (1.25,1) {};
        \node[dot2,label=above:$U_{Y}$] (uy) at (2.5,1) {};
        \draw[a2] (x) -- (w);
        \draw[a2] (w) -- (y);
        \draw[a2] (x) .. controls (1.25,0.55) .. (y);
        \draw[a2] (ux) -- (x);
        \draw[a2] (uw) -- (w);
        \draw[a2] (uy) -- (y);
    \end{tikzpicture}
    \caption{}
    \label{fig:markovian-dag}
  \end{subfigure}%
  \hfill
  \begin{subfigure}[b]{0.3\textwidth}
    \centering
    \begin{tikzpicture}[scale=1.0]
        \node[dot,label=below:$X$] (x) at (0,0) {};
        \node[dot,label=below:$W$] (w) at (1.25,0) {};
        \node[dot,label=below:$Y$] (y) at (2.5,0) {};
        \node[dot2,label=above:$U_X$] (ux) at (0,1) {};
        \node[dot2,label=above:$U_{WY}$] (uwy) at (1.875,1) {};
        \draw[a2] (x) -- (w);
        \draw[a2] (w) -- (y);
        \draw[a2] (x) .. controls (1.25,0.55) .. (y);
        \draw[a2] (ux) -- (x);
        \draw[a2] (uwy) -- (w);
        \draw[a2] (uwy) -- (y);
    \end{tikzpicture}
    \caption{}
    \label{fig:semimarkovian-dag-partial}
  \end{subfigure}%
  \hfill
  \begin{subfigure}[b]{0.3\textwidth}
    \centering
    \begin{tikzpicture}[scale=1.0]
        \node[dot,label=below:$X$] (x) at (0,0) {};
        \node[dot,label=below:$W$] (w) at (1.25,0) {};
        \node[dot,label=below:$Y$] (y) at (2.5,0) {};
        \node[dot2,label=above:$U$] (u) at (1.25,1) {};
        \draw[a2] (x) -- (w);
        \draw[a2] (w) -- (y);
        \draw[a2] (x) .. controls (1.25,0.55) .. (y);
        \draw[a2] (u) -- (x);
        \draw[a2] (u) -- (w);
        \draw[a2] (u) -- (y);
    \end{tikzpicture}
    \caption{}
    \label{fig:semimarkovian-dag-complete}
  \end{subfigure}
  \caption{{Examples of canonical PSCM specifications compatible with the endogenous order $\sigma \coloneq (X,W,Y)$.} (a)~Markovian specification. (b)~Specification with one exogenous variable shared by $W$ and $Y$. (c)~Specification with one exogenous variable shared by all endogenous variables. Grey nodes denote exogenous variables.}
  \label{fig:example-dags}
\end{figure}
We expand the discussion of \emph{canonicalisation} of SCMs and PSCMs briefly introduced in Sect.~\ref{sec:algorithm}. We recall here that the canonical PSCM used in the proof of Thm.~\ref{thm:bounding_equivalence} builds upon the theoretical contributions of \citet{zhangPartialCounterfactualIdentification2021} and \citet{balkeCounterfactualProbabilitiesComputational1994}. Generally speaking, an SCM takes its structural equations (SEs) as given. When these are unknown and the endogenous variables take on finitely many values, the literature shows that one may still specify a PSCM from the causal graph alone by adopting a \emph{canonical} representation.

For an endogenous variable $V \in \bm{V}$ with parents $\mathrm{Pa}_V$, a state of its exogenous parent $U_V$ acts as a selector for a deterministic mapping (or \emph{response function}) from the domain of $\mathrm{Pa}_V$ to $V$. There are exactly $|\mathcal{V}|^{|\mathrm{Pa}_V|}$ such possible functions, where $|\mathrm{Pa}_V| \coloneq \prod_{Z \in \mathrm{Pa}_V} |\mathcal{Z}|$. In the absence of unobserved confounding, as in Markovian models, each $U_V$ is independent of all other exogenous variables and can be replaced without loss of generality by a discrete canonical variable of finite cardinality $|\mathcal{U}_V| = |\mathcal{V}|^{|\mathrm{Pa}_V|}$. When latent confounding is present, as in semi-Markovian models, unobserved variables may jointly affect multiple endogenous variables. The graph is then partitioned into \emph{confounded components} (or \emph{c-components}) \citep{tian2002studies}, defined as the maximal sets of endogenous variables connected by paths of latent confounders (see \emph{confounded paths} in \citeauthor{shpitserWhatCounterfactualsCan2007}, \citeyear{shpitserWhatCounterfactualsCan2007}). In a canonical specification, all latent confounding within each c-component $\bm{C} \subseteq \bm{V}$ is replaced without loss of generality by a single discrete exogenous variable $U_{\bm{C}}$ that jointly selects the response functions for every variable in $\bm{C}$, whilst the exogenous variables associated with distinct c-components remain mutually independent. Because the state space of $U_{\bm{C}}$ must encompass all joint assignments of response functions, its canonical space $\mathcal{U}_{\bm{C}}$ indexes all possible combinations of individual mappings, yielding $|\mathcal{U}_{\bm{C}}| = \prod_{V \in \bm{C}} |\mathcal{V}|^{|\mathrm{Pa}_V|}$. Canonicalisation guarantees that any discrete SCM (or PSCM) admits a discrete equivalent inducing an identical set of observational and counterfactual distributions \citep{evansGraphsMarginsBayesian2016,duarte2024automated}.

To ground the ideas, we proceed by means of an example. Consider $\bm{V} \coloneq \{X, W, Y\}$ with binary domains $\mathcal{X} = \mathcal{Y} = \{0, 1\}$ and ternary domain $\mathcal{W} = \{0, 1, 2\}$. We assume the causal order $X \prec W \prec Y$. The three panels in Fig.~\ref{fig:example-dags} share the same endogenous structure compatible with this order. A canonical specification of these causal diagrams is obtained by enumerating the deterministic response functions for each endogenous variable. 
First, consider the unconfounded setting in Fig.~\ref{fig:markovian-dag}, where every variable forms a singleton c-component. Since $\mathrm{Pa}_X = \emptyset$, there are two constant mappings, giving $|\mathcal{U}_X| = 2$. For $W$, whose parent is $X$, the mechanism $w \gets f_W(x, u_W)$ assigns a state in $\mathcal{W}$ to each $x \in \mathcal{X}$; hence, $|\mathcal{U}_W| = |\mathcal{W}|^{|\mathcal{X}|} = 3^2 = 9$. Finally, the mapping $y \gets f_Y(x, w, u_Y)$ assigns a value in $\mathcal{Y}$ to every configuration $(x, w) \in \mathcal{X} \times \mathcal{W}$, yielding $|\mathcal{U}_Y| = |\mathcal{Y}|^{|\mathcal{X}| \cdot |\mathcal{W}|} = 2^{2 \cdot 3} = 64$. Next, consider Fig.~\ref{fig:semimarkovian-dag-partial}, where unobserved confounding between $W$ and $Y$ induces the partition into c-components $\{X\}$ and $\{W, Y\}$. The canonical confounder $U_{WY}$ jointly indexes the mappings of both $W$ and $Y$, yielding $|\mathcal{U}_{WY}| = |\mathcal{U}_W| \cdot |\mathcal{U}_Y| = 9 \cdot 64 = 576$, whilst $|\mathcal{U}_X| = 2$. Lastly, in the fully confounded model of Fig.~\ref{fig:semimarkovian-dag-complete}, all endogenous variables belong to a single c-component $\bm{C} = \bm{V}$. A single global exogenous parent $U$ governs all mechanisms, with cardinality $|\mathcal{U}| = |\mathcal{U}_X| \cdot |\mathcal{U}_W| \cdot |\mathcal{U}_Y| = 2 \cdot 9 \cdot 64 = 1{,}152$. The canonical PSCM constructed in the proof of Thm.~\ref{thm:bounding_equivalence} matches this full-confounding specification.
\end{document}